\documentclass[sigconf]{acmart}

\usepackage{graphicx}
\usepackage{booktabs}
\usepackage{amsmath}
\usepackage{microtype}
\usepackage{balance}
\usepackage{multirow}

\usepackage{stfloats}
\usepackage{placeins}
\usepackage{subcaption}
\usepackage{svg}

\usepackage{enumitem}
\usepackage{colortbl}
\usepackage{adjustbox}
\usepackage{cuted}        % provides strip
\usepackage{caption} 
\definecolor{bestcolor}{HTML}{D6EAF8}    % light blue for best
\definecolor{secondcolor}{HTML}{FEF5E7}  % light orange for second best
\definecolor{ourscolor}{HTML}{EBF5FB}    % subtle blue tint for Ours row

\newtheorem{assumption}{Assumption}

\NewDocumentCommand{\zehong}
{ mO{} }{\textcolor{cyan}{\textsuperscript{\textit{Zehong}}\textsf{\textbf{\small[#1]}}}}

\title{Generalizing GNNs with Tokenized Mixture of Experts}

\author{Xiaoguang Guo}
\orcid{0009-0009-9973-8163}
\affiliation{%
  \institution{University of Connecticut}
  \city{Storrs}
  \state{Connecticut}
  \country{USA}}
\email{mca25001@uconn.edu}

\author{Zehong Wang}
\orcid{0000-0002-7670-6777}
\affiliation{%
  \institution{University of Notre Dame}
  \city{Notre Dame}
  \state{Indiana}
  \country{USA}}
\email{zwang43@nd.edu}

\author{Jiazheng Li}
\orcid{0009-0000-0672-389X}
\affiliation{%
  \institution{University of Connecticut}
  \city{Storrs}
  \state{Connecticut}
  \country{USA}}
\email{jiazheng.li@uconn.edu}

\author{Shawn Spitzel}
\orcid{0009-0009-5949-7000} 
\affiliation{%
  \institution{University of Connecticut}
  \city{Storrs}
  \state{Connecticut}
  \country{USA}}
\email{shawn.spitzel@uconn.edu}

\author{Qi Yang}
\orcid{0009-0000-1808-0537}  
\authornote{Visiting Student at the University of Connecticut.}
\affiliation{%
  \institution{University of Connecticut}
  \city{Storrs}
  \state{Connecticut}
  \country{USA}}
\email{qiyang942@gmail.com}

\author{Kaize Ding}
\orcid{0000-0001-6684-6752}
\affiliation{%
  \institution{Northwestern University}
  \city{Evanston}
  \state{Illinois}
  \country{USA}}
\email{kaize.ding@northwestern.edu}

\author{Jundong Li}
\orcid{0000-0002-1878-817X}
\affiliation{%
  \institution{University of Virginia}
  \city{Charlottesville}
  \state{Virginia}
  \country{USA}}
\email{jundong@virginia.edu}

\author{Chuxu Zhang}
\orcid{0000-0002-8349-7926}
\authornote{Corresponding author.}
\affiliation{%
  \institution{University of Connecticut}
  \city{Storrs}
  \state{Connecticut}
  \country{USA}}
\email{chuxu.zhang@uconn.edu}
\copyrightyear{2026}
\acmYear{2026}
\setcopyright{cc}
\setcctype{by}
\acmConference[KDD '26]{Proceedings of the 32nd ACM SIGKDD Conference on Knowledge Discovery and Data Mining V.2}{August 09--13, 2026}{Jeju Island, Republic of Korea}
\acmBooktitle{Proceedings of the 32nd ACM SIGKDD Conference on Knowledge Discovery and Data Mining V.2 (KDD '26), August 09--13, 2026, Jeju Island, Republic of Korea}
\acmDOI{10.1145/3770855.3817952}
\acmISBN{979-8-4007-2259-2/2026/08}

\begin{document}

\begin{abstract}
Deployed graph neural networks (GNNs) operate as frozen snapshots, yet must simultaneously fit clean data, generalize under distribution shifts, and remain stable against input perturbations---three goals that are difficult to satisfy at once with a single fixed model.
We first show theoretically that a single fixed inference rule can create a stability--generalization tradeoff: making the model insensitive to perturbations can also suppress task-relevant signals needed to fit and generalize. Input-dependent routing---assigning different computation paths to different inputs---can relax this tension, but brings new fragility: distribution shifts may mislead routing decisions, and perturbations can destabilize routing, compounding downstream errors.
We formalize these effects through two risk decompositions that separate (i)~how well the available paths cover diverse test conditions from how accurately the router selects among them, and (ii)~how sensitive each fixed path is from how much routing fluctuation amplifies that sensitivity.
Guided by these analyses, we propose \textbf{STEM-GNN}: \textbf{S}table \textbf{T}ok\textbf{E}nized \textbf{M}ixture-of-Experts \textbf{GNN}, a pretrain-then-finetune framework that couples a \emph{mixture-of-experts encoder} providing diverse computation paths to cover heterogeneous test conditions, a \emph{vector-quantized token interface} that maps encoder outputs to a discrete codebook to absorb small representation drift induced by input perturbations and routing fluctuations before downstream layers, and a \emph{Lipschitz-regularized prediction head} that bounds how much the output can amplify residual upstream variation. Across eight node, link, and graph benchmarks, STEM-GNN maintains strong clean performance; on representative node benchmarks, it improves the three-way balance under degree/homophily shifts and feature/edge perturbations. The code and data are available at \url{https://github.com/GXG-CS/STEM-GNN}.

\end{abstract}

%% CCS concepts (generated via ACM CCS tool: https://dl.acm.org/ccs)
\begin{CCSXML}
<ccs2012>
   <concept>
       <concept_id>10010147.10010257.10010293.10010294</concept_id>
       <concept_desc>Computing methodologies~Neural networks</concept_desc>
       <concept_significance>500</concept_significance>
       </concept>
   <concept>
       <concept_id>10010147.10010257.10010293.10010319</concept_id>
       <concept_desc>Computing methodologies~Learning latent representations</concept_desc>
       <concept_significance>300</concept_significance>
       </concept>
 </ccs2012>
\end{CCSXML}

\ccsdesc[500]{Computing methodologies~Neural networks}
\ccsdesc[300]{Computing methodologies~Learning latent representations}

% \keywords{}
\keywords{Graph Neural Networks; Graph Representation Learning; Mixture of Experts; Out-of-Distribution Generalization; Robustness}

\maketitle

\section{Introduction}

Graph neural networks (GNNs) have been applied to many real-world systems, powering applications from recommendation and information retrieval to molecular property prediction and knowledge reasoning~\cite{battaglia2018relational,bronstein2021geometric,hamilton2017inductive,gilmer2017neural,sanchez2020learning, ju2022grape, DBLP:journals/corr/abs-2502-12908}.
In production pipelines, GNNs are typically deployed as versioned snapshots that run without parameter updates between releases~\cite{ying2018graph,sculley2015hidden,borisyuk2024lignn,hu2020open}.
During this \emph{frozen deployment} window, the model encounters two distinct sources of mismatch.
\emph{Distribution shifts} arise when the test graph is drawn from a different regime than the training data~\cite{gui2022good}---for example, a molecular model trained on common scaffolds may fail on novel chemical spaces~\cite{wu2018moleculenet}.
\emph{Feature and structure perturbations} arise when individual test inputs are corrupted by missing attributes, noisy interactions, or mild edge rewiring while remaining semantically close to their clean counterparts~\cite{zugner2018adversarial,jin2020graph,wang2024safety}.
Because the model cannot adapt at test time, strong clean performance, reliable behavior under shifts, and stable behavior under perturbations must all be achieved by a single set of frozen parameters---a tri-objective tension we term the \emph{impossible triangle} of frozen graph deployment.

\begin{figure}[t]
  \centering
  \includegraphics[width=\linewidth]{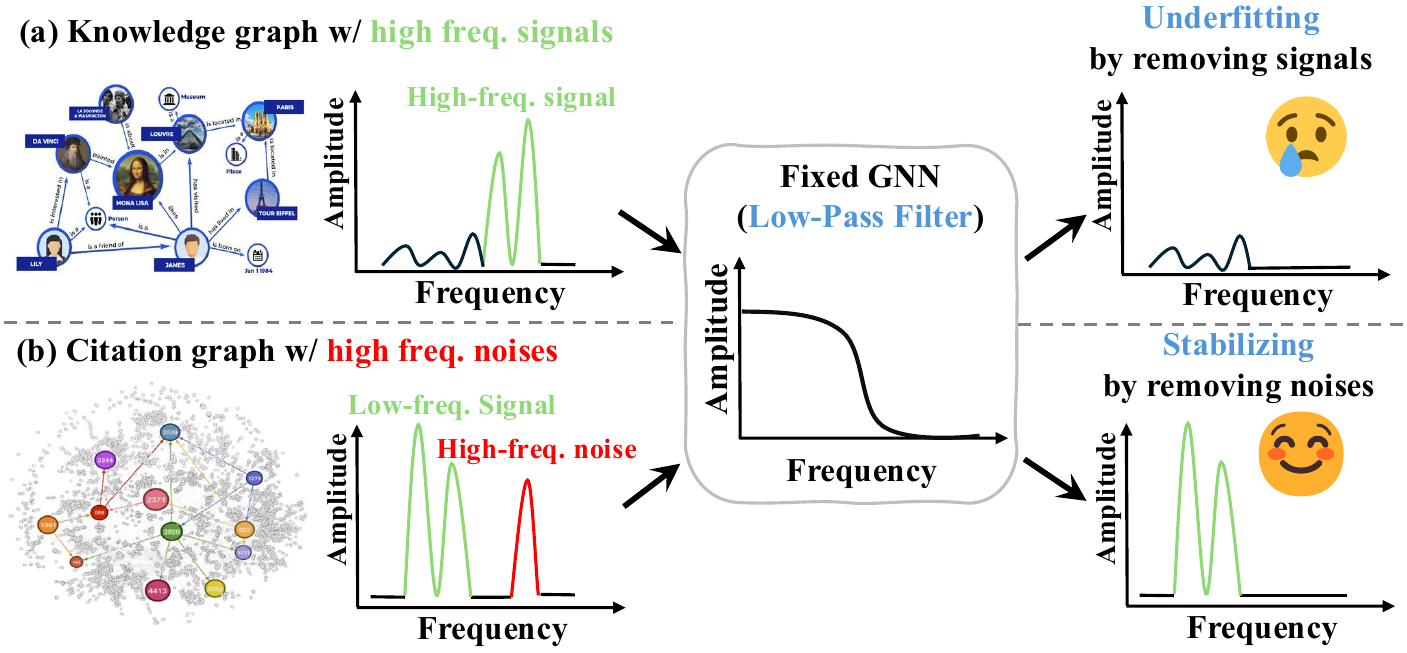}
  \vspace{-0.25in}
  \caption{A single fixed low-pass GNN cannot serve both regimes: on a knowledge graph where high-frequency components carry task-relevant signal, low-pass filtering discards it (underfitting); on a citation graph where high-frequency components are noise, the same filtering happens to suppress it (stabilizing). No fixed rule fits both, motivating input-dependent computation.}
    \vspace{-0.25in}
\label{fig:motivation_fixed_low_pass}
\end{figure}

Prior work has advanced each axis of this triangle largely in isolation.
Graph pretraining~\cite{velivckovic2018deep,hou2022graphmae,chien2021node,wang2024gft, zhao2021multi, wang2025towards, wang2024subgraph, thapaliya2025semantic} learns transferable representations that improve clean accuracy and downstream adaptation, yet the resulting model still applies a single fixed forward pass at test time.
OOD graph learning~\cite{li2025out,wu2024graph,li2022learning,wu2024graphmetro,zhu2024mario, wang2025generative} targets distribution shifts across environments but often treats perturbation robustness as secondary.
Robust graph learning~\cite{zhang2020gnnguard,rong2019dropedge,zugner2018adversarial,zhang2022chasing,yuan2024mitigating,yuan2024graph} hardens predictors against corruptions but can sacrifice expressivity needed for generalization.
Crucially, most methods deploy a \emph{fixed test-time computation rule}: the same message-passing and readout mapping is executed for every input regardless of its characteristics.
This raises a fundamental question: \emph{how does the choice of test-time computation mechanism---not just the training objective---govern the tri-objective balance under frozen deployment?}

We adopt the concept of \emph{instance-conditional computation} (ICC) to answer the question: routing different inputs through different mechanisms within a single frozen model~\cite{shazeer2017outrageously,graves2016adaptive,bengio2015conditional}.
ICC expands the family of effective mechanisms that a frozen parameter set can realize, improving coverage over heterogeneous deployment scenarios where a single fixed rule falls short (Figure~\ref{fig:motivation_fixed_low_pass}).
However, ICC also introduces new failure modes.
Under distribution shifts, routing weights can drift, assigning inputs to suboptimal mechanisms.
Under perturbations, instability manifests in two complementary ways: (i)~even when the routing weights are unchanged, the effective mechanism may still be sensitive to input noise; (ii)~perturbations may shift the routing weights, altering the effective mechanism and compounding the output variation.
Therefore, making ICC robust under frozen deployment requires coupling expanded mechanism coverage with explicit deployment-time stability control---a design goal that existing ICC and MoE methods do not jointly address in graph learning.

We tackle this challenge from both a theoretical and a methodological perspective.
On the \textbf{theory} side, we formalize static inference (fixed rule) versus ICC under the tri-objective framework and show that: (i)~static inference faces a structural tension---Lipschitz stability control caps the mechanism's reliance on perturbation-sensitive components, creating a floor on worst-environment risk; and (ii)~ICC decomposes OOD risk into mechanism coverage and selection quality, and stability risk into routing-fixed sensitivity and routing drift, revealing precise levers absent in the static case.

On the \textbf{methodology} side, we propose \textbf{STEM-GNN} (\textbf{S}table \textbf{T}ok\textbf{E}nized \textbf{M}ixture-of-Experts \textbf{GNN}), a pretrain-then-finetune framework that instantiates robust ICC for graph learning through three tightly coupled designs:
\textit{(1) MoE Encoder for Coverage Expansion.}
A mixture-of-experts message-passing encoder routes each node through an input-dependent combination of shared experts, realizing a family of effective mechanisms under a single frozen parameter set.
\textit{(2) VQ Tokenization for Representation Stabilization.}
A vector-quantized (VQ) token interface discretizes encoder outputs into a fixed codebook before passing them to the prediction head. Continuous drift from input perturbations or routing fluctuations that does not cross quantization boundaries produces zero change in the head's input, stabilizing the encoder-to-head pathway.
\textit{(3) Lipschitz Regularization for Sensitivity Control.}
A Frobenius penalty on the prediction head bounds its Lipschitz constant, limiting how strongly any residual change---including discrete token switches---is amplified into output variation.
We summarize our contributions as follows:
\begin{itemize}
[leftmargin=*]\setlength{\itemsep}{0pt}
  \item \textbf{Concept.} We reframe robust graph generalization under frozen deployment as a tri-objective tension and identify the test-time computation mechanism---not the training objective alone---as the key lever governing this balance.
  \item \textbf{Theory.} We formalize static inference vs. ICC under this framework: constructing a spectral witness family, we prove that stability control caps static reliance and induces a worst-environment floor; we further show that ICC offers new design levers via coverage–selection and sensitivity–drift decompositions.
  \item \textbf{Methodology.} Based on theoretical analysis, we propose STEM-GNN, a pretrain-then-finetune framework that couples an MoE encoder, VQ tokenization, and Lipschitz-controlled prediction to operationalize robust ICC for graphs.
  \item \textbf{Experiments.} Across eight benchmarks spanning node, link, and graph tasks, STEM-GNN maintains strong clean accuracy; on representative node benchmarks, it achieves the best tri-objective balance under OOD shifts and perturbations.
\end{itemize}

\vspace{-0.1in}

\section{Related Work}
\textbf{Graph Neural Networks.}
GNNs learn functions over graph structured data primarily through neighborhood aggregation and message passing, enabling topology- and attribute-aware representation learning~\cite{scarselli2008graph, bronstein2017geometric, shuman2013emerging, bruna2013spectral}.
Prominent backbones include convolutional approximations~\cite{kipf2016semi, defferrard2016convolutional}, attentional aggregation~\cite{velickovic2017graph, zhang2018gaan,zhang2019heterogeneous}, scalable inductive frameworks~\cite{hamilton2017inductive, chen2018fastgcn}, deep residual networks~\cite{li2019deepgcns, chen2020simple}, and graph transformers for global dependency modeling~\cite{dwivedi2020generalization, ying2021transformers}.
Despite these architectural variations, standard backbones typically execute a single fixed computation rule under frozen deployment, locking in their test-time inductive bias.
Under heterogeneous conditions (degree, homophily, or perturbations), this fixed-rule rigidity creates a tri-objective tension among fit, OOD generalization, and perturbation stability.

\vspace{3pt}
\noindent\textbf{Robust Graph Generalization.}
Research on robust graph generalization has progressed along three intertwined threads: transferable pretraining, OOD generalization, and perturbation robustness, yet these lines are often developed with different objectives and evaluation protocols.
Graph pretraining learns transferable representations through contrastive paradigms that encourage smoothness~\cite{velivckovic2018deep, you2020graph, zhu2020deep, qiu2020gcc, qian2022co, wen2024coarse, wen2024gcvr, li2026graph} or generative objectives that better preserve local structural details~\cite{hou2022graphmae, chien2021node, tian2023heterogeneous, wang2024gft, wang2025generative, wang2025beyond}.
A complementary line tokenizes structural, higher-order, temporal, or molecular patterns into compositional tokens for transferable representation~\cite{wang2025generative, wang2025beyond, zhao2026hypergraph, ma2026temporal, wang2026molecular}.
However, downstream inference after pretraining still follows a fixed test-time rule, limiting adaptability under shift.
OOD generalization methods address such shifts through invariant learning with test-time mechanisms~\cite{li2022learning, wu2022discovering, zhu2024mario, wang2025generative_tmlr} or adaptive architectures with instance-conditional routing~\cite{wu2024graph, wu2024graphmetro, liu2023fair}.
While instance-conditional routing expands mechanism coverage, existing designs often lack explicit deployment-time stability control (e.g., representation stabilization or sensitivity regularization).
In parallel, perturbation-robust GNNs harden predictors through denoising, pruning, or smoothing under a fixed computation rule~\cite{zhang2020gnnguard, jin2020graph, rong2019dropedge, zhao2023self}, yet these static mechanisms can sacrifice expressivity needed under distribution shifts.
Across these threads, methods typically optimize one or two objectives while treating the rest as auxiliary metrics.
We propose STEM-GNN, which unifies instance-conditional computation with deployment-time stability control for robust graph generalization under frozen deployment.

\section{Problem Formulation and Analysis}
\label{sec:theory}

We formulate and analyze robust graph learning for deployed GNNs under frozen
deployment through a tri-objective lens: \textbf{fit} (clean performance),
\textbf{ood} (worst-environment generalization), and \textbf{stab} (perturbation
robustness).
The central question is how the \emph{inference rule under frozen parameters}
shapes these tradeoffs; we contrast static inference ($\mathcal{H}_1$) and
instance-conditional computation ($\mathcal{H}_2$).

\subsection{Problem Formulation}
\label{sec:theory_probdef}

We study a graph neural network (GNN) $f_\theta:\mathcal{Z}\to\mathcal{U}$ under
frozen deployment (no test-time updates).
Input $z=(G,X)$ consists of an undirected graph $G=(V,E)$ and node features
$X\in\mathbb{R}^{|V|\times d}$; the GNN computes output $u=f_\theta(z)$ via
message passing.
Training uses distribution $D_0$, while test environments $E_{\mathrm{test}}$
induce distributions $\{D_e\}$.
Our deployment objective is captured by three risks: fitting on clean graphs,
worst-environment OOD generalization, and inference-time stability.
To model inference-time perturbations, we specify feature/structure discrepancy measures
$\Delta_x,\Delta_s$ together with budgets $(\rho_x,\rho_s)$.
For input $z=(G,X)$, the admissible perturbation set is
\begin{equation}
B(z) := \{\, z'=(G',X')\ |\ \Delta_x(X',X)\le \rho_x,\
\Delta_s(G',G)\le \rho_s \}.
\label{eq:prelim_Bz}
\end{equation}
For the static-inference witness analysis (\S\ref{sec:H1_main_th}), we specialize Eq.~\eqref{eq:prelim_Bz} to feature perturbations ($\rho_s{=}0$) and write $\rho$ for the induced bound on the high-component perturbation; the general set $B(z)$ retains both feature and structure budgets.
Let $\ell:\mathcal{U}\times\mathcal{Y}\to\mathbb{R}_+$ be the task loss
and $d_{\mathrm{out}}:\mathcal{U}\times\mathcal{U}\to\mathbb{R}_+$ an output discrepancy.
We formalize the three risks as:
\begin{align}
\mathcal{R}_{\mathrm{fit}}(\theta) &:= \mathbb{E}_{(z,y)\sim D_0}[\ell(f_\theta(z),y)],
\label{eq:prelim_Rfit}
\\
\mathcal{R}_{\mathrm{ood}}(\theta) &:= \sup_{e\in E_{\mathrm{test}}}\
\mathbb{E}_{(z,y)\sim D_e}[\ell(f_\theta(z),y)],
\label{eq:prelim_Rood}
\\
\mathcal{R}_{\mathrm{stab}}(\theta) &:= \mathbb{E}_{(z,y)\sim D_0}
\Big[\sup_{z'\in B(z)} d_{\mathrm{out}}(f_\theta(z), f_\theta(z'))\Big].
\label{eq:prelim_Rstab}
\end{align}

\noindent\textbf{Analysis roadmap.}
Since the predictor is frozen, the tradeoff among these risks is shaped by the
inference rule under fixed parameters.
Hence, we contrast two paradigms:
\textbf{(i)~Static inference} ($\mathcal{H}_1$) applies a single message-passing rule to all inputs, yielding a stability-generalization tension (Sec.~\ref{sec:H1_main_th});
\textbf{(ii)~Instance-conditional computation} ($\mathcal{H}_2$) routes inputs through different computation paths, relaxing the $\mathcal{H}_1$ limitation but introducing routing instability under shifts and perturbations (Sec.~\ref{sec:H2_main_th}).
These insights inform STEM-GNN (Sec.~\ref{sec:method}).

\subsection{$\mathcal{H}_1$: A Witness Tension for Static Inference}
\label{sec:H1_main_th}

Static inference deploys a \emph{single frozen GNN computation rule} for all inputs: the same message-passing layers and readout are applied to every graph and node under fixed parameters.
For GNNs, this creates a fundamental tension.
To remain stable to feature/structure perturbations, the model must limit reliance on perturbation sensitive components (often associated with high-frequency, non-smooth signals on the graph), yet these same components can be crucial for fitting and worst-environment generalization.
To isolate this GNN-specific tradeoff, we analyze a minimal witness family where a single scalar $\eta_\theta$ controls such reliance.

\vspace{3pt}\noindent\textit{Static inference and a minimal witness family.}
Consider the witness family~\cite{shalev2014understanding}:
\begin{equation}
f_\theta(G,X)\ :=\ h_\theta\!\big(X_{\mathrm{L}} + \eta_\theta X_{\mathrm{H}}\big),
\qquad \eta_\theta\ge 0,
\label{eq:H1_proxy_mix_main_th}
\end{equation}
where $X_{\mathrm{L}}$ and $X_{\mathrm{H}}$ represent complementary \emph{graph-induced spectral components} of node features, e.g.,
$X_{\mathrm{L}}=P_{\mathrm{L}}(G)X$ and $X_{\mathrm{H}}=P_{\mathrm{H}}(G)X$ for graph filters $P_{\mathrm{L}},P_{\mathrm{H}}$ that separate low-pass (smooth) and high-pass (non-smooth) information~\cite{shuman2013emerging, defferrard2016convolutional}.
The scalar reliance $\eta_\theta$ is fixed after training and shared across environments at deployment. Because $X_{\mathrm{L}}$ and $X_{\mathrm{H}}$ are induced by graph filters $P_{\mathrm{L}}(G)$ and $P_{\mathrm{H}}(G)$, the resulting witness tension is tied to graph-dependent message passing rather than an input-independent feature split.

For prescribed $\alpha>0$ and $\varepsilon\ge 0$, and given the Lipschitz stability constraint $L_h\,\eta_\theta\,\rho\le\varepsilon$, we define the slice-optimal worst-environment risk for the witness subfamily as:
\begin{equation}
\beta_{1}(\alpha,\varepsilon)
:=
\inf_{\theta\in\mathcal{H}_1^{\mathrm{wit}}:\ \mathcal{R}_{\mathrm{fit}}(\theta)\le \alpha,\ L_h\,\eta_\theta\,\rho\le \varepsilon}
\ \mathcal{R}_{\mathrm{ood}}(\theta).
\label{eq:beta1_H1_def_th}
\end{equation}
This worst-environment objective is standard in robust risk formulations under
distribution shift~\cite{sagawa2019distributionally}. In STEM-GNN, this constraint is encouraged through the Frobenius penalty on the prediction head (Eq.~\eqref{eq:lip_fro_penalty}), which softly controls the head Lipschitz constant during training.

\vspace{3pt}\noindent\textbf{Assumptions.}
Admissible perturbations mainly act on the high component~\cite{zugner2018adversarial,guo2026safety},
with magnitude at most $\rho$.
The readout $h_\theta$ is $L_h$-Lipschitz~\cite{miyato2018spectral} from the
GNN embedding norm to the output metric.
Good fitting (and at least one test environment) requires nontrivial reliance
on the high component.
Complete formal definitions and the proof are deferred to Appendix~\ref{app:H1}.

We now show that a Lipschitz stability constraint induces an unavoidable cap on
$\eta_\theta$, which in turn bounds worst-environment risk away from zero.

\begin{theorem}[Witness-family tension under a Lipschitz stability constraint]
\label{thm:H1_lower_main_th}
Under the assumptions, define:
\begin{equation}
\eta_{\max}(\varepsilon)\ :=\ \frac{\varepsilon}{L_h\,\rho},
\qquad
\eta_{\min}(\alpha)\ :=\ \inf\{\eta\ge 0:\ \psi_{\mathrm{fit}}(\eta)\le \alpha\},
\label{eq:H1_eta_minmax_main_th}
\end{equation}
where $\psi_{\mathrm{fit}}(\eta)$ is the monotone lower-bound function relating fit risk to reliance, and $e_1\in E_{\mathrm{test}}$ is the witness environment with associated monotone lower bound $\psi_{e_1}(\eta)$.
If $\eta_{\min}(\alpha)>\eta_{\max}(\varepsilon)$, then no GNN in
\eqref{eq:H1_proxy_mix_main_th} can satisfy both
$\mathcal{R}_{\mathrm{fit}}(\theta)\le \alpha$ and the constraint $L_h\,\eta_\theta\,\rho\le\varepsilon$.
Otherwise, for any $\theta$ in \eqref{eq:H1_proxy_mix_main_th} satisfying both
$\mathcal{R}_{\mathrm{fit}}(\theta)\le \alpha$ and $L_h\,\eta_\theta\,\rho\le\varepsilon$, it holds that
\begin{equation}
\mathcal{R}_{\mathrm{ood}}(\theta)
\ \ge\
\psi_{e_1}\!\big(\eta_{\max}(\varepsilon)\big)
\ =:\ \underline{\beta}_1(\alpha,\varepsilon)
\ >\ 0.
\label{eq:H1_ood_lower_main_th}
\end{equation}
\end{theorem}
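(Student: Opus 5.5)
The argument is a direct chain of monotonicity, with the formal assumptions (deferred to Appendix~\ref{app:H1}) taken in the following form. First, a fit lower bound: $\mathcal{R}_{\mathrm{fit}}(\theta)\ge\psi_{\mathrm{fit}}(\eta_\theta)$ with $\psi_{\mathrm{fit}}$ nonincreasing in $\eta$, so more reliance on $X_{\mathrm{H}}$ is needed to fit. Second, a witness-environment lower bound: $\mathbb{E}_{D_{e_1}}[\ell(f_\theta(z),y)]\ge\psi_{e_1}(\eta_\theta)$ with $\psi_{e_1}$ nonincreasing and strictly positive on $[0,\eta_{\max}(\varepsilon)]$. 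This encodes that $e_1$ needs nontrivial high-component reliance. I will state these explicitly at the start, together with lower semicontinuity of $\psi_{\mathrm{fit}}$, or else work with the infimum definition directly.

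\emph{Step 1 (the cap).} The stability constraint $L_h\eta_\theta\rho\le\varepsilon$ rearranges to $\eta_\theta\le\varepsilon/(L_h\rho)=\eta_{\max}(\varepsilon)$, since $L_h,\rho>0$. If one instead wants to derive this cap from $\mathcal{R}_{\mathrm{stab}}\le\varepsilon$, use the assumption that perturbations act on $X_{\mathrm{H}}$ with magnitude at most $\rho$. Then the change in the argument of $h_\theta$ is $\eta_\theta$ times a perturbation of norm at most $\rho$, and the $L_h$-Lipschitz property gives $d_{\mathrm{out}}\le L_h\eta_\theta\rho$. So the constraint is the natural sufficient condition, and I would remark on this.

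\emph{Step 2 (the floor from fitting).} If $\mathcal{R}_{\mathrm{fit}}(\theta)\le\alpha$, then $\psi_{\mathrm{fit}}(\eta_\theta)\le\alpha$, so $\eta_\theta$ lies in the set defining $\eta_{\min}(\alpha)$, and hence $\eta_\theta\ge\eta_{\min}(\alpha)$.

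\emph{Step 3 (the two cases).} If $\eta_{\min}(\alpha)>\eta_{\max}(\varepsilon)$, Steps 1–2 force $\eta_{\min}\le\eta_\theta\le\eta_{\max}$, which is a contradiction. So no feasible $\theta$ exists and the infimum in \eqref{eq:beta1_H1_def_th} is $+\infty$. Otherwise, take any feasible $\theta$; Step 1 gives $\eta_\theta\le\eta_{\max}(\varepsilon)$. Since $e_1\in E_{\mathrm{test}}$, we have $\mathcal{R}_{\mathrm{ood}}(\theta)\ge\mathbb{E}_{D_{e_1}}[\ell]\ge\psi_{e_1}(\eta_\theta)\ge\psi_{e_1}(\eta_{\max}(\varepsilon))$, using that $\psi_{e_1}$ is nonincreasing. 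Positivity comes from the assumption that $\psi_{e_1}>0$ up to $\eta_{\max}$. Taking the infimum also yields $\beta_1(\alpha,\varepsilon)\ge\underline{\beta}_1(\alpha,\varepsilon)$.

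The main obstacle is not the algebra but pinning down the assumptions precisely enough that the steps are valid. One needs the direction of monotonicity for each $\psi$. One also needs strict positivity of $\psi_{e_1}$ at $\eta_{\max}(\varepsilon)$ itself. This is nontrivial because, if $e_1$ were fit perfectly by some $\eta^\star\le\eta_{\max}$, the floor would vanish. I would therefore phrase the witness assumption as ``$\psi_{e_1}(\eta)>0$ for all $\eta<\eta^\star_{e_1}$'' and require $\eta_{\max}(\varepsilon)<\eta^\star_{e_1}$, i.e.\ $\varepsilon$ small enough. I would also treat the boundary case $\eta_{\min}=\eta_{\max}$, where the infimum defining $\eta_{\min}$ may not be attained, by invoking lower semicontinuity.
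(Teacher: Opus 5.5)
Your proposal follows the same three steps as the paper's proof in Appendix~\ref{app:H1}: the stability constraint caps $\eta_\theta\le\eta_{\max}(\varepsilon)$, the fit constraint forces $\eta_\theta\ge\eta_{\min}(\alpha)$, and monotonicity of $\psi_{e_1}$ then gives the floor. Your stronger positivity requirement is justified, because the paper's Assumption~\ref{ass:H1_conflict}(ii) asserts only $\psi_{e_1}(0)>0$, and for a merely nonincreasing $\psi_{e_1}$ this does not give $\psi_{e_1}(\eta_{\max}(\varepsilon))>0$ as the paper claims; the lower-semicontinuity remark, however, is unnecessary, since $\eta_\theta$ belonging to the set that defines $\eta_{\min}(\alpha)$ already gives $\eta_\theta\ge\eta_{\min}(\alpha)$, and the floor in the second case uses only $\eta_\theta\le\eta_{\max}(\varepsilon)$.
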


Consequently, whenever the feasible slice is nonempty, the slice-optimal risk satisfies $\beta_1(\alpha,\varepsilon)\ge\underline{\beta}_1(\alpha,\varepsilon)>0$, closing the loop between the slice-optimal objective in Eq.~\eqref{eq:beta1_H1_def_th} and the witness lower bound.

This result characterizes a witness-family tension rather than a universal impossibility result for all static GNN parameterizations; extending the analysis beyond this family is left for future work. The Lipschitz stability constraint caps reliance at $\eta_\theta\le\eta_{\max}(\varepsilon)$ because larger reliance on $X_{\mathrm{H}}$ amplifies perturbations through the $L_h$-Lipschitz readout, while achieving fit risk at most $\alpha$ requires $\eta_\theta\ge\eta_{\min}(\alpha)$ since the high component carries task-relevant signal. Thus, under a Lipschitz stability constraint, static GNN inference within this witness subfamily faces a fit--stability--OOD tension that motivates instance-conditional computation.

% =========================
% H2: ICC levers
% =========================
\subsection{$\mathcal{H}_2$: ICC Exposes Coverage and Stability Levers}
\label{sec:H2_main_th}

Section~\ref{sec:H1_main_th} showed that a single frozen rule must share one global reliance pattern, producing a positive worst-environment floor under a Lipschitz stability constraint. Instance-conditional computation (ICC) relaxes this rigidity by routing inputs to different \emph{effective message-passing mechanisms}, expanding the mechanism family available at deployment; the cost is a new stability pathway, since routing decisions can drift under distribution shifts or admissible perturbations.

\smallskip\noindent\textbf{ICC class.}
We define the ICC class~\cite{bengio2015conditional} of a GNN $f_\theta$ under frozen deployment: there exist a routing space $\mathcal{R}$, a routing map $r_\theta:\mathcal{Z}\to\mathcal{R}$, and an execution map $F:\mathcal{R}\times\mathcal{Z}\to\mathcal{U}$ such that
\begin{equation}
f_\theta(z)\ :=\ F\big(r_\theta(z),\, z\big),\qquad z=(G,X),
\label{eq:H2_general_form_main_th}
\end{equation}
where $r_\theta$ selects an input-dependent routed state and $F$ executes the corresponding routed GNN computation~\cite{rosenbaum2017routing}. This representation lets us isolate two ICC-specific levers.

\smallskip\noindent\textbf{Lever 1: coverage and selection.}
Consider an ICC instantiation that selects from a finite family of mechanisms~\cite{jacobs1991adaptive}. Let $\beta_{\mathrm{cov}}$ denote worst-environment coverage (the best loss achievable in the family per environment), $\delta_{\mathrm{sel}}(e)$ the selection error in environment $e$, and assume the evaluation loss is bounded by $L_{\max}$ on $E_{\mathrm{test}}$ (used only for analysis). Then
\begin{equation}
\mathcal{R}_{\mathrm{ood}}(\theta)
\ \le\
\beta_{\mathrm{cov}}
+
L_{\max}\cdot \sup_{e\in E_{\mathrm{test}}}\delta_{\mathrm{sel}}(e),
\label{eq:H2_ood_bound_main_th}
\end{equation}
isolating two design knobs: expand the mechanism family (reduce $\beta_{\mathrm{cov}}$) and stabilize routing under shift (reduce $\sup_e\delta_{\mathrm{sel}}(e)$).

\smallskip\noindent\textbf{Lever 2: routing-fixed sensitivity and routing drift.}
Stability under ICC depends jointly on execution sensitivity with a fixed routed state and on routing drift. Let $\mathcal{R}_{\mathrm{base}}(\theta)$ denote routing-fixed sensitivity, $\mathcal{R}_{\mathrm{route}}(\theta)$ the routing drift, and let $L_F^{B}(D_0)$ be a uniform execution envelope over $\mathrm{supp}(D_0)$ bounding how strongly execution magnifies routing changes~\cite{khromov2023some}. Then
\begin{equation}
\mathcal{R}_{\mathrm{stab}}(\theta)
\ \le\
\mathcal{R}_{\mathrm{base}}(\theta)
+
L_F^{B}(D_0)\cdot \mathcal{R}_{\mathrm{route}}(\theta).
\label{eq:H2_stab_decomp_main_th}
\end{equation}
When routing is constant, $\mathcal{R}_{\mathrm{route}}=0$, and Eq.~\eqref{eq:H2_stab_decomp_main_th} reduces to the routing-fixed sensitivity term; the additional $L_F^{B}(D_0)\cdot\mathcal{R}_{\mathrm{route}}$ term captures the ICC-specific drift pathway. Formal definitions and proofs for Eq.~\eqref{eq:H2_ood_bound_main_th} and Eq.~\eqref{eq:H2_stab_decomp_main_th} are in Appendix~\ref{app:H2_details}.

\smallskip\noindent
These two levers directly motivate STEM-GNN: MoE improves coverage ($\beta_{\mathrm{cov}}\downarrow$), VQ stabilizes the encoder-to-head interface by absorbing small representation changes (Lemma~\ref{lem:vq_margin_invariance}), and Lipschitz regularization limits amplification of residual token changes (Prop.~\ref{prop:vq_lip_combo}); empirical diagnostics in Fig.~\ref{fig:routing_switch} confirm both routing specialization and VQ stability. The analysis relies on a Lipschitz stability constraint and a finite routed mechanism family; relaxing these assumptions is a natural direction for future work.

\section{Methodology}
\label{sec:method}

% Two-column spanning figure (use in a twocolumn paper)
\begin{figure*}[!t]
  \centering
  \includegraphics[width=\textwidth]{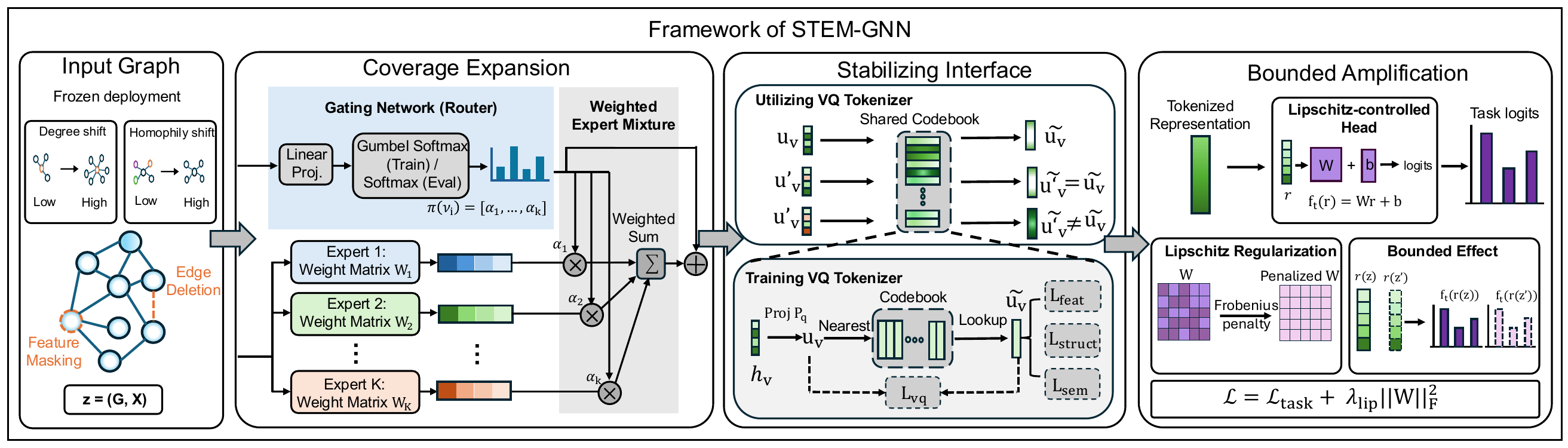}
  \vspace{-0.3in}
  \caption{The STEM-GNN framework: the MoE message-passing encoder first expands mechanism coverage via input-dependent routing, then a VQ token interface discretizes intermediate representations with a shared codebook to stabilize the encoder-to-head pathway, and finally a Lipschitz-regularized prediction head bounds amplification of residual variations.}
    \vspace{-0.15in}
  \label{fig:method_overview}
\end{figure*}

Building on this analysis, we propose \textbf{STEM-GNN}, a pretrain-then-finetune GNN framework for robust instance-conditional inference under frozen deployment (Figure~\ref{fig:method_overview}). STEM-GNN combines MoE routing for coverage expansion, VQ tokenization for a stable encoder-to-head interface, and Lipschitz regularization to control amplification of residual token changes.
% Experiments demonstrate improved OOD generalization and perturbation stability while maintaining competitive clean performance.

\subsection{Coverage Expansion via MoE Encoder}
\label{sec:method_moe}

% \vspace{3pt}\noindent\textbf{Motivation.}
Static inference applies a single computation rule to all inputs, which can 
be overly restrictive under heterogeneous deployment conditions.
We therefore use a MoE message-passing encoder to expand 
the deployed mechanism family under a fixed parameter and compute budget, 
with globally shared experts and input-dependent routing that can vary
across nodes and layers.

\vspace{3pt}\noindent\textbf{MoE message passing with soft routing.}
Let $h_v^{(l)}\in\mathbb{R}^{d_l}$ denote the representation of node $v$ 
at layer $l$.
Each MoE layer maintains $K$ shared expert operators $\{f_k^{(l)}\}_{k=1}^{K}$ 
and a lightweight router $g^{(l)}$.
The router outputs a soft routing distribution:
\begin{equation}
\pi^{(l)}(v)=\mathrm{softmax}\!\big(g^{(l)}(h_v^{(l)})\big)\in\Delta^{K-1},
\label{eq:moe_gate_soft}
\end{equation}
where $\Delta^{K-1}$ is the probability simplex over $K$ experts.
Each layer first computes a neighborhood summary:
\begin{equation}
z_v^{(l)}=\psi^{(l)}\!\Big(h_v^{(l)},
\operatorname{AGG}_{u\in\mathcal{N}(v)}\phi^{(l)}(h_u^{(l)})\Big),
\label{eq:moe_summary_generic}
\end{equation}
then updates the representation via a soft mixture of expert transforms:
\begin{equation}
h_v^{(l+1)}=
\sigma\!\Big(\sum_{k=1}^{K}\pi_k^{(l)}(v)\, f_k^{(l)}\!\big(z_v^{(l)}\big)\Big),
\label{eq:moe_update_soft}
\end{equation}
where $\sigma$ is a pointwise nonlinearity.
During training, routing is optimized with a Gumbel-Softmax relaxation 
(temperature $\tau>0$); deployment uses the deterministic softmax 
in Eq.~\eqref{eq:moe_gate_soft}.

\vspace{3pt}\noindent\textbf{Effective mechanisms and coverage.}
Eq.~\eqref{eq:moe_update_soft} realizes an input-dependent effective operator:
\begin{equation}
f_{\mathrm{eff}}^{(l)}(\cdot;v)
:=\sum_{k=1}^{K}\pi_k^{(l)}(v)\, f_k^{(l)}(\cdot),
\label{eq:moe_effective}
\end{equation}
so a single frozen parameter set realizes a family of mechanisms indexed 
by the routing distributions $\{\pi^{(l)}(v)\}$.
This enlarges the deployed mechanism family under a fixed budget, improving 
coverage across diverse deployment scenarios.

% \vspace{3pt}\noindent\textbf{Summary.}
% The MoE encoder expands coverage via input-dependent routing under frozen 
% deployment. However, routing can drift under distribution shifts and 
% perturbations, motivating the stabilized token interface introduced next.

\subsection{Stabilizing Intermediate Representations via VQ Tokenization}
\label{sec:method_vq}

% \vspace{3pt}\noindent\textbf{Motivation.}
MoE enables input-dependent execution, but the encoder outputs can still drift
under environment shifts and admissible perturbations. Under frozen deployment,
this drift directly perturbs the input to the task head. To stabilize the
encoder-to-head interface, a vector-quantized (VQ) token layer discretizes
intermediate representations so the head consumes tokens from a fixed codebook
rather than drifting continuous embeddings.
% \vspace{3pt}\noindent\textbf{Nearest-code tokenization.} 

Specifically, let $h_v \in \mathbb{R}^{d}$ be the encoder output for node $v \in V$, and let
$u_v = P_q(h_v) \in \mathbb{R}^{d_q}$ be a projected embedding. Given a codebook
$\mathcal{C} = \{c_j\}_{j=1}^{M} \subset \mathbb{R}^{d_q}$, VQ assigns a token
index and codeword by nearest-code matching:
% \begin{equation}
% q_\theta(v;z) := \arg\min_{1 \le j \le M} \|u_v(z) - c_j\|_2,
% \qquad
% Q(u_v(z)) := c_{q_\theta(v;z)}.
% \label{eq:vq_assign}
% \end{equation}
\begin{equation}
q_\theta(v;z) := \arg\min_{1 \le j \le M} \|u_v(z) - c_j\|_2,\;
Q(u_v(z)) := c_{q_\theta(v;z)}.
\label{eq:vq_assign}
\end{equation}

% \vspace{3pt}\noindent\textbf{Training protocol.}
The projection $P_q$ and codebook $\mathcal{C}$ are learned with the standard
VQ-VAE objective~\cite{van2017neural}:
\begin{equation}
\mathcal{L}_{\mathrm{vq}}
=
\mathbb{E}_{v}\!\Big[
\|\mathrm{sg}(u_v)-Q(u_v)\|_2^2
+
\beta\|u_v-\mathrm{sg}(Q(u_v))\|_2^2
\Big],
\label{eq:vq_loss}
\end{equation}
where $\mathrm{sg}(\cdot)$ denotes stop-gradient.
During pretraining, we train the encoder, $P_q$, and $\mathcal{C}$ jointly,
and apply our feature-level, structure-level, and semantic-level pretraining
signals on the tokenized representations.
During finetuning, we freeze the codebook $\mathcal{C}$ and the quantization
rule in Eq.~\eqref{eq:vq_assign} to preserve a consistent interface partition.
% \vspace{3pt}\noindent\textbf{Deployed form.}
The deployed predictor is:
\begin{equation}
f_\theta(z) := f_t\!\big(Q(u_\theta(z))\big),
\label{eq:vq_model_comp}
\end{equation}
where $f_t$ is the task head operating on tokenized embeddings.

\vspace{3pt}\noindent\textbf{Token invariance under small perturbations.}
For each node $v$, define the quantization margin at $z$:
\begin{equation}
m_v(z)
:=
\min_{j \neq q_\theta(v;z)} \|u_v(z) - c_j\|_2
-
\|u_v(z) - c_{q_\theta(v;z)}\|_2.
\label{eq:vq_margin}
\end{equation}

% \begin{lemma}[Margin-based invariance of the VQ interface]
% \label{lem:vq_margin_invariance}
% If for every $v \in V$,
% \begin{equation}
% \sup_{z' \in B(z)} \|u_v(z') - u_v(z)\|_2 \;<\; \tfrac{1}{2} m_v(z),
% \label{eq:vq_margin_condition}
% \end{equation}
% then token indices are invariant on $B(z)$, i.e., $q_\theta(v;z')=q_\theta(v;z)$
% for all $z' \in B(z)$ and all $v \in V$, and hence
% $Q(u_\theta(z')) = Q(u_\theta(z))$ for all $z' \in B(z)$.
% \end{lemma}

\begin{lemma}[Margin-based invariance of the VQ interface]
\label{lem:vq_margin_invariance}
If for every $v \in V$,
\begin{equation}
\sup_{z' \in B(z)} \|u_v(z') - u_v(z)\|_2 \;<\; \tfrac{1}{2} m_v(z),
\label{eq:vq_margin_condition}
\end{equation}
then $q_\theta(v;z') = q_\theta(v;z)$ for all $z' \in B(z)$ and $v \in V$, and thus
$Q(u_\theta(z')) = Q(u_\theta(z))$ for all $z' \in B(z)$.
\end{lemma}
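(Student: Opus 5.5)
The plan is a direct triangle-inequality argument carried out node by node, with no earlier result needed beyond the definitions in Eqs.~\eqref{eq:vq_assign} and~\eqref{eq:vq_margin}.

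Fix $z$, a node $v\in V$, and an arbitrary $z'\in B(z)$. Write $u:=u_v(z)$, $u':=u_v(z')$, $j^\star:=q_\theta(v;z)$, and $\delta:=\|u'-u\|_2$. By the hypothesis~\eqref{eq:vq_margin_condition}, $\delta<\tfrac12 m_v(z)$. In particular $m_v(z)>0$, so $j^\star$ is the unique minimizer at $z$ and the tie-breaking convention in the $\arg\min$ plays no role there.

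The key step is to compare distances to codewords at $u'$. For the current code, the triangle inequality gives
\[
\|u'-c_{j^\star}\|_2\le \|u-c_{j^\star}\|_2+\delta .
\]
For any $j\neq j^\star$, the reverse triangle inequality together with the definition of the margin gives
\[
\|u'-c_j\|_2\ \ge\ \|u-c_j\|_2-\delta\ \ge\ \|u-c_{j^\star}\|_2+m_v(z)-\delta .
\]
Subtracting, we get
\[
\|u'-c_j\|_2-\|u'-c_{j^\star}\|_2\ \ge\ m_v(z)-2\delta\ >\ 0 .
\]
Hence $c_{j^\star}$ is the strict unique nearest codeword to $u'$, so $q_\theta(v;z')=j^\star$ regardless of how ties would be broken.

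Since $v$ and $z'$ were arbitrary, the token indices agree for all nodes and all $z'\in B(z)$. Applying $Q$ componentwise, $Q(u_v(z'))=c_{q_\theta(v;z')}=c_{q_\theta(v;z)}=Q(u_v(z))$, which gives $Q(u_\theta(z'))=Q(u_\theta(z))$.

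There is no real obstacle here. The only point needing care is strictness: the strict inequality in~\eqref{eq:vq_margin_condition} is exactly what yields a strict gap $m_v(z)-2\delta>0$. That strict gap rules out ties at $z'$, so invariance holds without any assumption on the tie-breaking rule. A minor subtlety is that the node set of $G'$ should index the same $V$, which holds because perturbations in $B(z)$ are defined nodewise on the same node set.
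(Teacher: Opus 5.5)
Your proposal is correct and follows essentially the same argument as the paper's proof in Appendix~\ref{sec:appendix_vq_proof}: bound the distance to $c_{j^\star}$ from above and the distance to every competitor from below by the triangle inequality, then subtract to get the strict gap $m_v(z)-2\delta>0$. Your extra remarks, that $m_v(z)>0$ follows from the hypothesis and that the strict gap makes tie-breaking irrelevant, are accurate refinements the paper leaves implicit.
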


\begin{proof}
The detailed proof is provided in Appendix~\ref{sec:appendix_vq_proof}.
\end{proof}

Note that routing variation is implicit in $u_v(z')-u_v(z)$, since the soft router (Eq.~\ref{eq:moe_gate_soft}) is a continuous function of $z$; the same bound therefore covers both feature drift and routing-induced drift.

% \vspace{3pt}\noindent\textbf{Summary.}
% VQ stabilizes the encoder-to-head interface by discretizing intermediate
% representations into a fixed codebook. When perturbations are small, token
% assignments remain unchanged (Lemma~\ref{lem:vq_margin_invariance}). When
% perturbations cause token switches, output variation is governed by how strongly
% the task head amplifies these discrete changes, motivating the Lipschitz
% regularization introduced next.

\subsection{Bounding Output Amplification via Lipschitz Regularization}
\label{sec:method_lip}

% \vspace{3pt}\noindent\textbf{Motivation.}
VQ stabilizes the encoder–head interface by mapping representations to a fixed codebook.
However, when perturbations cause token switches, the remaining deployment variation depends on how strongly the task head amplifies its input changes.
We thus regularize the head to bound its Lipschitz constant~\cite{miyato2018spectral}, constraining input–output amplification under shifts and perturbations.

% \vspace{3pt}\noindent\textbf{Head and regularization.}
Specifically, let $r(z)\in\mathbb{R}^{d_q}$ denote the input after tokenization (nodewise for node tasks, or after a pooling for graph tasks), and define $f_\theta(z)=f_t(r(z))$.
In our implementation, the head is linear:
\begin{equation}
f_t(r)=Wr+b.
\label{eq:linear_head_lip}
\end{equation}
Its Lipschitz constant under $\|\cdot\|_2$ equals $\|W\|_2$ and satisfies $\|W\|_2\le\|W\|_F$.
To softly control this constant during training, we add a Frobenius penalty:
\begin{equation}
\mathcal{L}
=
\mathcal{L}_{\mathrm{task}}
+\lambda_{\mathrm{lip}}\|W\|_F^{2},
\label{eq:lip_fro_penalty}
\end{equation}
which discourages large operator norms and reduces worst-case amplification through the head.

% \vspace{3pt}\noindent\textbf{Implication with tokenization.}
Let $\mathcal{C}$ be the VQ codebook and $\mathrm{diam}(\mathcal{C})=\max_{i,j}\|c_i-c_j\|_2$.
Since VQ restricts head inputs to come from a finite set of token embeddings, any token switch induces a bounded change in the head input (bounded by the codebook diameter, up to pooling for graph-level tasks).
With a controlled head Lipschitz constant, the resulting output change is bounded accordingly.

\begin{proposition}[Bounded output amplification under token switches]
\label{prop:vq_lip_combo}
Assume the head $f_t$ is $L$-Lipschitz under $\|\cdot\|_2$ and, for graph-level tasks, the pooling operator is $L_{\mathrm{pool}}$-Lipschitz with respect to the nodewise norm; set $L_{\mathrm{pool}}=1$ for node-level tasks. Then for any admissible perturbation $z'\in B(z)$, the output satisfies
\begin{equation}
\|f_t(r(z'))-f_t(r(z))\|_2
\ \le\
L\,L_{\mathrm{pool}}\,\mathrm{diam}(\mathcal{C}).
\label{eq:vq_lip_combo_body}
\end{equation}
\end{proposition}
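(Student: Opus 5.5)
The plan is to compose two Lipschitz bounds: push the output difference back through the head, then through the pooling map, and finally bound the nodewise input change using the fact that every tokenized vector is a codeword.

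First, by the $L$-Lipschitz property of $f_t$,
\[
\|f_t(r(z'))-f_t(r(z))\|_2\le L\,\|r(z')-r(z)\|_2 .
\]
So it suffices to show $\|r(z')-r(z)\|_2\le L_{\mathrm{pool}}\,\mathrm{diam}(\mathcal{C})$.

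For node-level tasks, $r(z)=Q(u_v(z))=c_{q_\theta(v;z)}$ and $r(z')=c_{q_\theta(v;z')}$ are both elements of $\mathcal{C}$. Their distance is therefore at most $\max_{i,j}\|c_i-c_j\|_2=\mathrm{diam}(\mathcal{C})$, which gives the claim with $L_{\mathrm{pool}}=1$. If no token switches, Lemma~\ref{lem:vq_margin_invariance} gives the sharper value zero, but the crude diameter bound covers both cases uniformly. Note that this step needs no smallness of the perturbation: any $z'$, admissible or not, produces codeword inputs.

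For graph-level tasks, write $r(z)=\mathrm{pool}\big(\{Q(u_v(z))\}_{v\in V}\big)$ and use $L_{\mathrm{pool}}$-Lipschitzness with respect to the nodewise norm:
\[
\|r(z')-r(z)\|_2\le L_{\mathrm{pool}}\,\big\|\big(Q(u_v(z'))-Q(u_v(z))\big)_{v}\big\| .
\]
Each nodewise difference is a difference of two codewords, so it has norm at most $\mathrm{diam}(\mathcal{C})$.

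\textbf{Main obstacle.} The only delicate point is what ``nodewise norm'' means. If it is the max over nodes, $\max_v\|\cdot\|_2$ (the natural reading for mean or max pooling, which are $1$-Lipschitz in it), the bound is exactly $\mathrm{diam}(\mathcal{C})$ and we are done. If it were a stacked $\ell_2$ norm, a factor $\sqrt{|V|}$ would appear. I would fix the nodewise norm as $\max_v\|\cdot\|_2$ and absorb any graph-size dependence into $L_{\mathrm{pool}}$. A further subtlety arises if $z'$ changes the node set (structure perturbations). I would handle it by assuming $V$ fixed under $B(z)$, as the structure budget only rewires edges, or by letting the pooling Lipschitz constant account for it.
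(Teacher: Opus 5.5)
Your argument is essentially the paper's proof: the head's Lipschitz bound, then the pooling map's Lipschitz bound, then the per-node bound $\|Q(u_v(z'))-Q(u_v(z))\|_2\le\mathrm{diam}(\mathcal{C})$, which holds because both are codewords. The paper resolves your norm question with the normalized RMS norm $\|A\|_{V,2}=\bigl(\frac{1}{|V|}\sum_{v\in V}\|a_v\|_2^2\bigr)^{1/2}$, which also avoids the $\sqrt{|V|}$ factor; your max-over-nodes choice works equally well and is a slightly weaker hypothesis (since $\|A\|_{V,2}\le\max_v\|a_v\|_2$), though your aside that max pooling is $1$-Lipschitz in that norm fails for coordinatewise max pooling of vector features ($a_1=(1,0)$, $a_2=(0,1)$ against zero gives ratio $\sqrt{2}$), which is harmless because $L_{\mathrm{pool}}$ is assumed.
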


\begin{proof}
The detailed proof is provided in Appendix~\ref{sec:appendix_vq_lip_bound}.
\end{proof}

% \vspace{3pt}\noindent\textbf{Takeaway.}
Tokenization bounds how much the head input can change when tokens switch, while Lipschitz regularization limits how much any such input change can be amplified into output variation.
Together, they control deployment-time sensitivity without changing inference-time computation.

\begin{table*}[ht]
  \centering
  % \scriptsize
  % \setlength{\tabcolsep}{3pt}
  % \renewcommand{\arraystretch}{1.05}

  \caption{Clean graph results on node, link, and graph tasks.
    Node/Link/Graph metrics are accuracy (\%); link prediction is multi-class relation classification (predicting relation type $r$ given an entity pair $(h,t)$).
    Best per column is {\setlength{\fboxsep}{1.5pt}\colorbox{bestcolor}{\textbf{bold}}}, second best is {\setlength{\fboxsep}{1.5pt}\colorbox{secondcolor}{\underline{underlined}}}.
    Results are reported as mean $\pm$ std over $k$ runs with different random seeds ($k=10$ by default, $k=20$ on \emph{WikiCS}).}
  \label{tab:clean}
  % \vspace{-2mm}
  \vspace{-0.15in}
  \resizebox{\textwidth}{!}{
    \begin{tabular}{lccccccccc}
      \toprule
                                        & \multicolumn{4}{c}{Node Classification}              & \multicolumn{2}{c}{Link Classification}              & \multicolumn{2}{c}{Graph Classification}             &                                                                                                                                                                                                                                                                                                                             \\
      \cmidrule(lr){2-5} \cmidrule(lr){6-7} \cmidrule(lr){8-9}
      Method                            & Cora                                                 & PubMed                                               & Wiki-CS                                              & Arxiv                                                & WN18RR                                               & FB15K237                                             & HIV                                                  & PCBA                                                 & Avg.                                     \\
      \midrule
      Linear                            & 58.03$_{\pm2.33}$                                    & 68.66$_{\pm2.24}$                                    & 70.36$_{\pm0.58}$                                    & 66.50$_{\pm0.14}$                                    & 78.50$_{\pm0.59}$                                    & 87.39$_{\pm0.07}$                                    & 66.37$_{\pm1.11}$                                    & 72.30$_{\pm0.34}$                                    & 71.01                                    \\
      GCN\cite{kipf2016semi}            & 75.65$_{\pm1.37}$                                    & 75.61$_{\pm2.10}$                                    & 75.28$_{\pm1.34}$                                    & \cellcolor{secondcolor}\underline{71.40}$_{\pm0.08}$ & 73.79$_{\pm0.39}$                                    & 82.22$_{\pm0.28}$                                    & 64.84$_{\pm4.78}$                                    & 71.32$_{\pm0.49}$                                    & 73.76                                    \\
      GAT\cite{velickovic2017graph}     & 76.24$_{\pm1.62}$                                    & 74.86$_{\pm1.87}$                                    & 76.78$_{\pm0.78}$                                    & 70.87$_{\pm0.24}$                                    & 80.16$_{\pm0.27}$                                    & 88.93$_{\pm0.15}$                                    & 65.54$_{\pm6.93}$                                    & 70.12$_{\pm0.89}$                                    & 75.44                                    \\
      GIN\cite{xu2018powerful}          & 73.59$_{\pm2.10}$                                    & 69.51$_{\pm6.87}$                                    & 49.77$_{\pm4.72}$                                    & 65.05$_{\pm0.50}$                                    & 74.02$_{\pm0.55}$                                    & 83.21$_{\pm0.53}$                                    & 66.86$_{\pm3.48}$                                    & 72.69$_{\pm0.22}$                                    & 69.34                                    \\
      \midrule
      DGI\cite{velivckovic2018deep}     & 72.10$_{\pm0.34}$                                    & 73.13$_{\pm0.64}$                                    & 75.32$_{\pm0.95}$                                    & 69.15$_{\pm0.20}$                                    & 75.75$_{\pm0.90}$                                    & 81.34$_{\pm0.15}$                                    & 59.62$_{\pm1.21}$                                    & 63.31$_{\pm0.89}$                                    & 71.22                                    \\
      BGRL\cite{thakoor2021large}       & 71.20$_{\pm0.30}$                                    & 75.29$_{\pm1.33}$                                    & 76.53$_{\pm0.69}$                                    & 71.19$_{\pm0.20}$                                    & 75.44$_{\pm0.30}$                                    & 80.66$_{\pm0.29}$                                    & 63.95$_{\pm1.06}$                                    & 67.09$_{\pm1.00}$                                    & 72.67                                    \\
      GraphMAE\cite{hou2022graphmae}    & 73.10$_{\pm0.40}$                                    & 74.32$_{\pm0.33}$                                    & 77.61$_{\pm0.39}$                                    & 70.90$_{\pm0.31}$                                    & 78.99$_{\pm0.48}$                                    & 85.30$_{\pm0.16}$                                    & 61.04$_{\pm0.55}$                                    & 63.30$_{\pm0.78}$                                    & 73.07                                    \\
      GIANT\cite{chien2021node}         & 75.13$_{\pm0.49}$                                    & 72.31$_{\pm0.53}$                                    & 76.56$_{\pm0.88}$                                    & 70.10$_{\pm0.32}$                                    & 84.36$_{\pm0.30}$                                    & 87.45$_{\pm0.54}$                                    & 65.44$_{\pm1.39}$                                    & 61.49$_{\pm0.99}$                                    & 74.11                                    \\
      GFT\cite{wang2024gft}             & 77.83$_{\pm1.17}$                                    & \cellcolor{secondcolor}\underline{77.70}$_{\pm1.39}$ & 78.40$_{\pm0.70}$                                    & 69.17$_{\pm0.57}$                                    & \cellcolor{secondcolor}\underline{91.19}$_{\pm0.33}$ & \cellcolor{secondcolor}\underline{89.91}$_{\pm0.05}$ & \cellcolor{secondcolor}\underline{70.93}$_{\pm1.02}$ & \cellcolor{secondcolor}\underline{78.95}$_{\pm0.27}$ & \cellcolor{secondcolor}\underline{79.26} \\
      \midrule
      CaNet\cite{wu2024graph}           & 76.41$_{\pm1.48}$                                    & 75.33$_{\pm1.54}$                                    & \cellcolor{secondcolor}\underline{78.88}$_{\pm0.45}$ & 66.70$_{\pm0.32}$                                    & --                                                   & --                                                   & --                                                   & --                                                   & --                                       \\
      GraphMETRO\cite{wu2024graphmetro} & 75.69$_{\pm3.34}$                                    & 75.24$_{\pm1.68}$                                    & 74.59$_{\pm2.08}$                                    & 67.27$_{\pm0.35}$                                    & --                                                   & --                                                   & --                                                   & --                                                   & --                                       \\
      MARIO\cite{zhu2024mario}          & \cellcolor{secondcolor}\underline{77.85}$_{\pm1.31}$ & 77.12$_{\pm0.98}$                                    & 78.60$_{\pm0.44}$                                    & 67.57$_{\pm0.45}$                                    & --                                                   & --                                                   & --                                                   & --                                                   & --                                       \\
      TFEGNN\cite{duan2024unifying}     & 77.33$_{\pm1.47}$                                    & 77.04$_{\pm1.02}$                                    & 76.74$_{\pm0.64}$                                    & 68.56$_{\pm0.67}$                                    & --                                                   & --                                                   & --                                                   & --                                                   & --                                       \\
      \midrule
      % \rowcolor{ourscolor}
      \textbf{STEM-GNN}                     & \cellcolor{bestcolor}\textbf{79.53}$_{\pm1.32}$      & \cellcolor{bestcolor}\textbf{77.84}$_{\pm1.66}$      & \cellcolor{bestcolor}\textbf{80.11}$_{\pm0.53}$      & \cellcolor{bestcolor}\textbf{72.31}$_{\pm0.25}$      & \cellcolor{bestcolor}\textbf{92.34}$_{\pm0.25}$      & \cellcolor{bestcolor}\textbf{90.26}$_{\pm0.16}$      & \cellcolor{bestcolor}\textbf{73.54}$_{\pm1.02}$      & \cellcolor{bestcolor}\textbf{80.39}$_{\pm0.38}$      & \cellcolor{bestcolor}\textbf{80.79}      \\
      \bottomrule
    \end{tabular}
  }
  \vspace{-2mm}
\end{table*}

\section{Experiments}

\subsection{Experimental Setup}

% Fig 3 (ablation) + Fig 4 (stability) merged into one two-column-spanning row,
% side by side, each keeping its own figure number/label (via \captionof).
\begin{figure*}[!t]
  \centering
  \begin{minipage}[t]{0.49\textwidth}
    \centering
    \includegraphics[width=\linewidth]{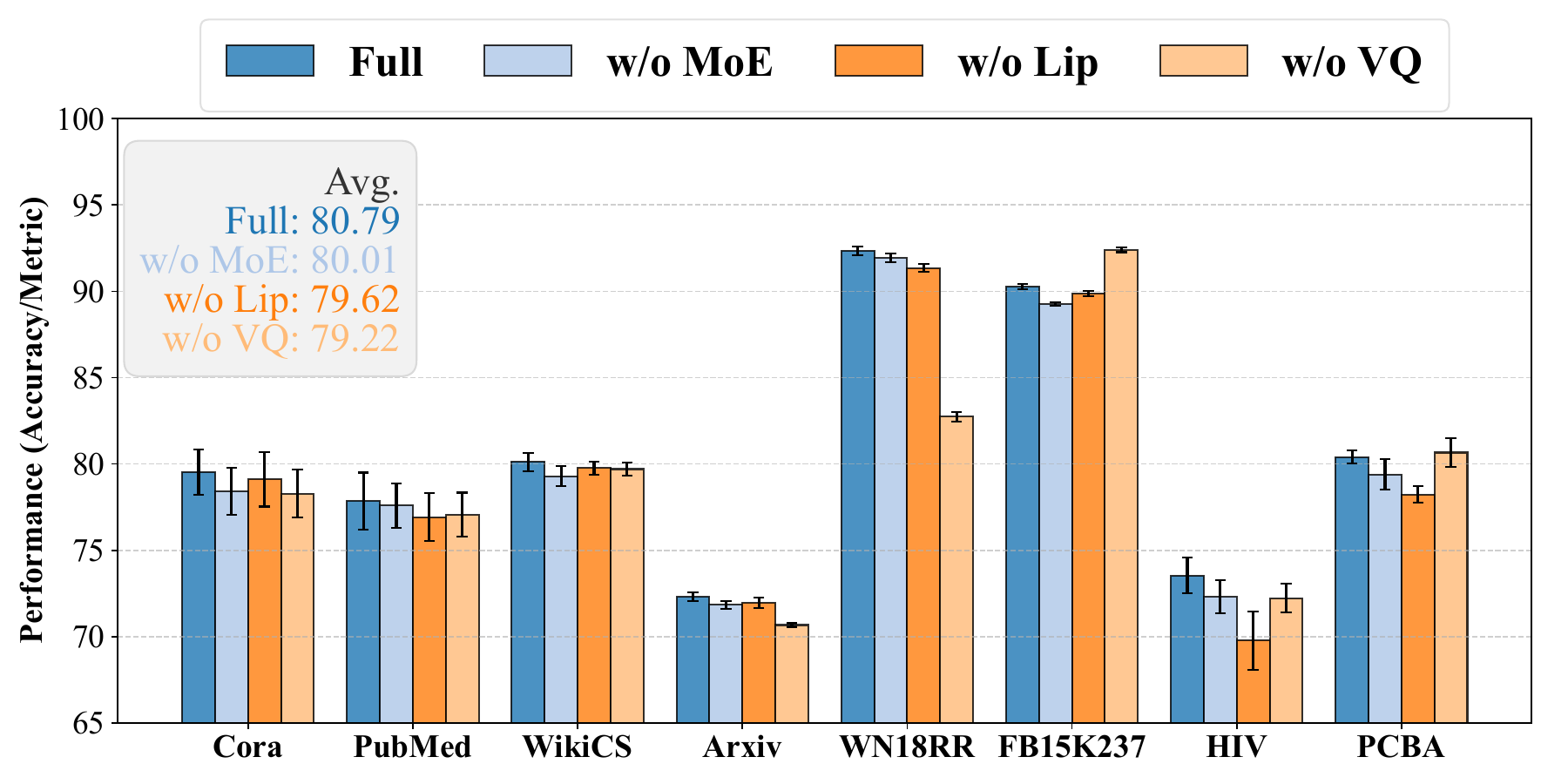}
    \vspace{-0.15in}
    \captionof{figure}{Ablation study on clean original graphs (mean $\pm$ std). We compare the full STEM-GNN with variants removing MoE (w/o MoE), Lipschitz regularization (w/o Lip), or VQ (w/o VQ) across all benchmarks, and report the average performance across datasets.}
    \label{fig:clean_graph_ablation}
  \end{minipage}\hfill
  \begin{minipage}[t]{0.49\textwidth}
    \centering
    \includegraphics[width=\linewidth]{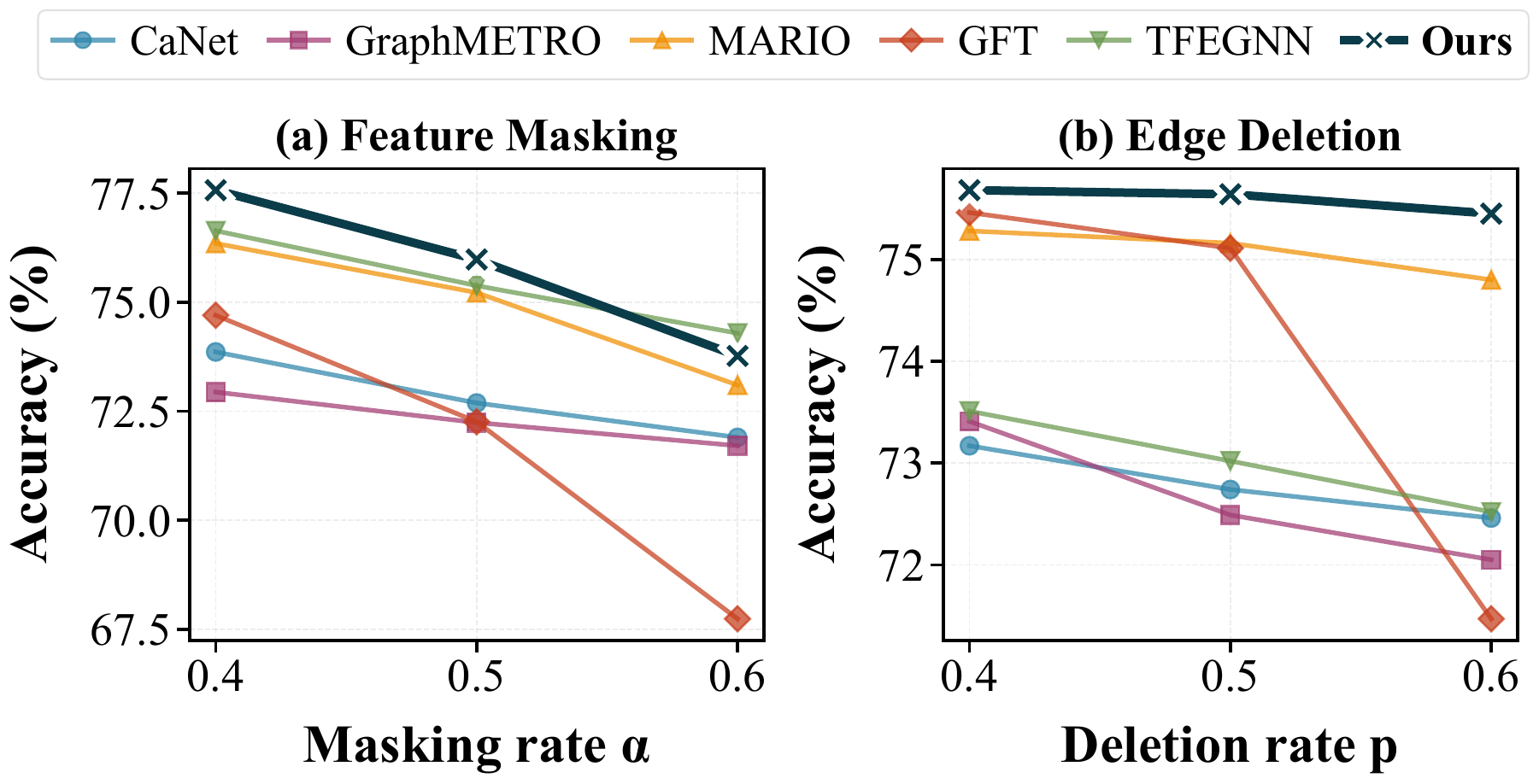}
    \vspace{-0.15in}
    \captionof{figure}{Stability under inference-time perturbation on representative node benchmarks. (a) Feature masking on Cora (rate $\alpha$). (b) Edge deletion on PubMed (rate $p$).}
    \label{fig:noise_cora_pubmed_feature_structural}
  \end{minipage}
\end{figure*}

\noindent\textbf{Datasets.}
We evaluate on eight benchmarks spanning node, link, and graph tasks across diverse domains:
(i) citation networks (Cora, PubMed, and Arxiv) and a Wikipedia graph (WikiCS) for node classification;
(ii) knowledge graphs (WN18RR and FB15K-237) for link-level multi-class relation prediction;
(iii) molecular graphs (HIV and PCBA) for graph classification.
We use standard public splits for Cora/PubMed, the official 20 splits for WikiCS, official benchmark splits for Arxiv and the molecular datasets, and fixed splits for FB15K-237/WN18RR following~\cite{wang2024gft,liu2023one}.

\vspace{3pt}\noindent\textbf{Baselines.}
We compare STEM-GNN with various baselines from three families:
(i) supervised baselines: Linear, GCN~\cite{kipf2016semi}, GAT~\cite{velickovic2017graph}, and GIN~\cite{xu2018powerful};
(ii) self/weakly-supervised pretraining and foundation methods: DGI~\cite{velivckovic2018deep}, BGRL~\cite{thakoor2021large}, GraphMAE~\cite{hou2022graphmae}, GIANT~\cite{chien2021node}, and GFT~\cite{wang2024gft};
(iii) OOD- and robustness-oriented methods: CaNet ~\cite{wu2024graph}, GraphMETRO~\cite{wu2024graphmetro}, MARIO~\cite{zhu2024mario}, and TFEGNN~\cite{duan2024unifying}.

\vspace{3pt}\noindent\textbf{Evaluation Protocol.}
Across all experiments, models are trained on clean graphs and evaluated under frozen deployment, with no test-time updates or adaptation. For baselines, we follow the authors' recommended setup (official code when available) and run all methods under a matched protocol. Clean evaluation spans all benchmarks; node-level frozen-deployment studies additionally evaluate stability, OOD generalization, and tri-objective balance. \emph{(i) Clean.} \textit{Fit} is clean test accuracy on the corresponding standard or ID-test split. \emph{(ii) Stability.} Inference-time perturbations apply independent Bernoulli feature masking (masked entries zeroed) and undirected-edge deletion on evaluation nodes. \emph{(iii) OOD generalization.} Distribution shifts sort nodes by undirected degree or by mean cosine similarity to one-hop neighbors; the bottom/top $15\%$ buckets are OOD-low/OOD-high, the remaining valid nodes form ID with class-stratified train/val/test splits, and \textit{OOD-worst} is the minimum accuracy across OOD buckets. \emph{(iv) Tri-objective.} For the more granular tri-objective evaluation, nodes are sorted by feature-homophily score (mean one-hop feature cosine similarity) into three OOD buckets---bottom $10\%$ (OOD3), $10$--$20\%$ (OOD2), and $20$--$30\%$ (OOD1)---with the mid-range $30$--$80\%$ as ID (the top $20\%$ is held out); for each run, ID is re-split with class stratification, and \textit{Perturb-mean} averages ID-test accuracy under feature masking at rates $\alpha\in\{0.2,0.4,0.6,0.8\}$. Results are averaged over $10$ random seeds unless otherwise specified.

\vspace{-0.1in}
\subsection{Results on Clean Graphs}

% Fig 4 (noise) moved into the merged two-column figure* above (side by side with Fig 3).
% \begin{figure}[!t]
%   \centering
%   \makebox[\linewidth][c]{%
%     \includegraphics[width=1.0\linewidth]{figures/noise.pdf}
%   }
%       \vspace{-0.25in}
%   \caption{Stability under inference-time perturbation on representative node benchmarks. (a) Feature masking on Cora (rate $\alpha$). (b) Edge deletion on PubMed (rate $p$).}
%       \vspace{-0.2in}
% \label{fig:noise_cora_pubmed_feature_structural}
% \end{figure}

\noindent\textbf{Main Results.}
Table~\ref{tab:clean} reports clean-graph performance across node, link, and graph tasks.
STEM-GNN achieves the best average (80.79\%), ranking first on all eight benchmarks, showing that clean accuracy remains compatible with deployment-oriented design under frozen deployment.
Compared to GFT (79.26\%), STEM-GNN improves most on node (+1.70 Cora, +3.14 Arxiv) and graph tasks (+2.61 HIV, +1.44 PCBA), while remaining competitive on link tasks (+1.15 WN18RR, +0.35 FB15K237).
This pattern suggests an advantage on higher-heterogeneity settings (node/graph), where a single fixed computation is more likely to be suboptimal across instances.
We also observe smaller standard deviations on several datasets (e.g., $\pm$0.25 vs.\ $\pm$0.57 on Arxiv), indicating reduced sensitivity to random seeds.
Against OOD/robust baselines on node benchmarks, STEM-GNN remains competitive while delivering strong link and graph results; the following sections show that the same frozen model preserves these gains under distribution shifts and perturbations.

\begin{table*}[t]
        \caption{Node classification accuracy (\%) under attribute-defined distribution shifts.
\textbf{Degree shift} and \textbf{homophily shift} evaluate models trained/selected on a class-stratified split of the in-distribution nodes and tested on buckets defined by the corresponding attribute.
OOD-low/OOD-high denote the bottom/top 15\% test nodes by the shift attribute.}

        \vspace{-0.1in}
        \label{tab:distribution-shift}
        \centering
        % \setlength{\tabcolsep}{4pt}
        % \renewcommand{\arraystretch}{1.12}
        % \scriptsize
        \resizebox{0.8\linewidth}{!}{
                \begin{tabular}{cl llcccccc}
                        \toprule
                         & Dataset                           & Bucket
                         & CaNet\cite{wu2024graph}
                         & GraphMETRO\cite{wu2024graphmetro}
                         & MARIO\cite{zhu2024mario}
                         & GFT\cite{wang2024gft}
                         & TFEGNN\cite{duan2024unifying}
                         & \textbf{STEM-GNN}                                                                                                                                                                                                                                                                                                                                                                                   \\
                        \midrule
                        \multirow{10}{*}{\rotatebox[origin=c]{90}{\textbf{Degree Shift}}}
                         & \multirow{3}{*}{Cora}
                         & ID                                & 84.12$_{\pm1.54}$ & 83.78$_{\pm1.19}$                                    & \cellcolor{secondcolor}\underline{87.07}$_{\pm1.81}$ & 87.01$_{\pm1.48}$                                    & 86.63$_{\pm1.45}$                                    & \cellcolor{bestcolor}\textbf{87.39}$_{\pm1.14}$                                                             \\
                         &                                   & OOD-low           & 76.35$_{\pm1.63}$                                    & 77.00$_{\pm1.65}$                                    & 78.77$_{\pm1.03}$                                    & 81.22$_{\pm0.70}$                                    & \cellcolor{bestcolor}\textbf{83.00}$_{\pm0.82}$      & \cellcolor{secondcolor}\underline{82.73}$_{\pm0.85}$ \\
                         &                                   & OOD-high          & 84.38$_{\pm0.83}$                                    & 82.22$_{\pm2.99}$                                    & \cellcolor{secondcolor}\underline{85.71}$_{\pm0.71}$ & 84.98$_{\pm1.11}$                                    & \cellcolor{secondcolor}\underline{85.71}$_{\pm1.79}$ & \cellcolor{bestcolor}\textbf{86.08}$_{\pm1.02}$      \\
                        \cmidrule{2-9}
                         & \multirow{3}{*}{PubMed}
                         & ID                                & 89.07$_{\pm0.48}$ & 87.93$_{\pm0.42}$                                    & 84.97$_{\pm0.68}$                                    & 89.29$_{\pm0.53}$                                    & \cellcolor{secondcolor}\underline{89.65}$_{\pm0.47}$ & \cellcolor{bestcolor}\textbf{90.61}$_{\pm0.35}$                                                             \\
                         &                                   & OOD-low           & 88.85$_{\pm0.14}$                                    & 87.15$_{\pm0.22}$                                    & 83.50$_{\pm0.52}$                                    & 88.73$_{\pm0.46}$                                    & \cellcolor{secondcolor}\underline{89.14}$_{\pm0.59}$ & \cellcolor{bestcolor}\textbf{89.44}$_{\pm0.42}$      \\
                         &                                   & OOD-high          & 88.75$_{\pm0.49}$                                    & 88.26$_{\pm0.31}$                                    & 85.86$_{\pm0.33}$                                    & \cellcolor{secondcolor}\underline{88.95}$_{\pm0.39}$ & 86.38$_{\pm0.48}$                                    & \cellcolor{bestcolor}\textbf{89.95}$_{\pm0.22}$      \\
                        \cmidrule{2-9}
                         & \multirow{3}{*}{WikiCS}
                         & ID                                & 83.48$_{\pm0.63}$ & 83.67$_{\pm1.41}$                                    & \cellcolor{secondcolor}\underline{85.56}$_{\pm0.83}$ & 85.07$_{\pm0.53}$                                    & 81.55$_{\pm0.97}$                                    & \cellcolor{bestcolor}\textbf{86.03}$_{\pm0.49}$                                                             \\
                         &                                   & OOD-low           & 75.68$_{\pm0.45}$                                    & 75.99$_{\pm1.53}$                                    & 74.64$_{\pm0.95}$                                    & 76.01$_{\pm1.12}$                                    & \cellcolor{bestcolor}\textbf{78.81}$_{\pm0.69}$      & \cellcolor{secondcolor}\underline{77.26}$_{\pm0.77}$ \\
                         &                                   & OOD-high          & \cellcolor{secondcolor}\underline{86.17}$_{\pm0.53}$ & 81.38$_{\pm2.45}$                                    & 83.45$_{\pm0.76}$                                    & 85.80$_{\pm0.74}$                                    & 85.18$_{\pm0.87}$                                    & \cellcolor{bestcolor}\textbf{86.23}$_{\pm1.56}$      \\
                        \cmidrule{2-9}
                         & Avg                               &                   & 84.09                                                & 83.04                                                & 83.28                                                & \cellcolor{secondcolor}\underline{85.23}             & 85.12                                                & \cellcolor{bestcolor}\textbf{86.19}                  \\
                        \midrule
                        \multirow{10}{*}{\rotatebox[origin=c]{90}{\textbf{Homophily Shift}}}
                         & \multirow{3}{*}{Cora}
                         & ID                                & 83.97$_{\pm0.64}$ & 84.52$_{\pm1.68}$                                    & 85.65$_{\pm1.21}$                                    & \cellcolor{secondcolor}\underline{87.20}$_{\pm1.90}$ & 86.11$_{\pm1.20}$                                    & \cellcolor{bestcolor}\textbf{87.56}$_{\pm1.23}$                                                             \\
                         &                                   & OOD-low           & 79.41$_{\pm1.16}$                                    & 78.08$_{\pm1.63}$                                    & \cellcolor{secondcolor}\underline{81.48}$_{\pm0.96}$ & 77.83$_{\pm0.86}$                                    & \cellcolor{bestcolor}\textbf{81.53}$_{\pm1.65}$      & 80.12$_{\pm1.13}$                                    \\
                         &                                   & OOD-high          & 85.52$_{\pm1.60}$                                    & 84.81$_{\pm1.57}$                                    & 87.44$_{\pm1.66}$                                    & \cellcolor{secondcolor}\underline{90.44}$_{\pm0.93}$ & 88.28$_{\pm0.89}$                                    & \cellcolor{bestcolor}\textbf{91.03}$_{\pm0.81}$      \\
                        \cmidrule{2-9}
                         & \multirow{3}{*}{PubMed}
                         & ID                                & 88.15$_{\pm0.27}$ & 87.38$_{\pm0.51}$                                    & 87.55$_{\pm0.42}$                                    & \cellcolor{secondcolor}\underline{89.70}$_{\pm0.58}$ & 89.08$_{\pm0.41}$                                    & \cellcolor{bestcolor}\textbf{90.49}$_{\pm0.43}$                                                             \\
                         &                                   & OOD-low           & 85.19$_{\pm0.34}$                                    & 84.90$_{\pm0.52}$                                    & 82.24$_{\pm0.23}$                                    & 85.52$_{\pm0.45}$                                    & \cellcolor{bestcolor}\textbf{87.34}$_{\pm0.54}$      & \cellcolor{secondcolor}\underline{86.26}$_{\pm0.43}$ \\
                         &                                   & OOD-high          & 87.08$_{\pm0.20}$                                    & 86.04$_{\pm0.61}$                                    & 85.86$_{\pm0.69}$                                    & \cellcolor{secondcolor}\underline{92.41}$_{\pm0.33}$ & 88.25$_{\pm0.28}$                                    & \cellcolor{bestcolor}\textbf{92.85}$_{\pm0.39}$      \\
                        \cmidrule{2-9}
                         & \multirow{3}{*}{WikiCS}
                         & ID                                & 83.72$_{\pm0.49}$ & \cellcolor{secondcolor}\underline{85.39}$_{\pm0.73}$ & 83.57$_{\pm0.44}$                                    & 84.64$_{\pm0.60}$                                    & 83.60$_{\pm0.60}$                                    & \cellcolor{bestcolor}\textbf{85.77}$_{\pm0.53}$                                                             \\
                         &                                   & OOD-low           & \cellcolor{secondcolor}\underline{81.60}$_{\pm0.48}$ & \cellcolor{bestcolor}\textbf{82.39}$_{\pm0.71}$      & 80.31$_{\pm0.61}$                                    & 80.08$_{\pm0.61}$                                    & 81.55$_{\pm0.63}$                                    & 80.87$_{\pm0.40}$                                    \\
                         &                                   & OOD-high          & 83.36$_{\pm0.44}$                                    & 87.95$_{\pm0.53}$                                    & 85.47$_{\pm0.25}$                                    & \cellcolor{secondcolor}\underline{89.42}$_{\pm0.52}$ & 85.59$_{\pm0.40}$                                    & \cellcolor{bestcolor}\textbf{89.88}$_{\pm0.54}$      \\
                        \cmidrule{2-9}
                         & Avg                               &                   & 84.22                                                & 84.61                                                & 84.40                                                & \cellcolor{secondcolor}\underline{86.36}             & 85.70                                                & \cellcolor{bestcolor}\textbf{87.20}                  \\
                        \bottomrule
                \end{tabular}
        }
     \vspace{-0.1in}
\end{table*}

\vspace{3pt}
\noindent\textbf{Ablation Study.}
Figure~\ref{fig:clean_graph_ablation} shows that the full model achieves the best average on clean graphs (80.79\%), and each ablation lowers the average (w/o MoE: 80.01\%, w/o Lip: 79.62\%, w/o VQ: 79.22\%; drops of $0.78$/$1.17$/$1.57$ points, respectively), indicating non-redundant benefits; on FB15K237 with $237$ fine-grained relation classes, VQ's discretization may introduce a task-specific information bottleneck that slightly lowers clean accuracy ($90.26\%$ vs.\ $92.39\%$ w/o VQ), while STEM-GNN remains competitive.
Among them, MoE yields the most uniform improvements across node/link/graph benchmarks, while VQ and Lipschitz regularization are more task-dependent, with larger gains on higher-heterogeneity or pooled graph settings, consistent with stabilizing the interface and controlling amplification.
Notably, the full model avoids sharp, dataset-specific regressions seen in some ablations, suggesting that the components complement each other by trading off coverage (MoE) against sensitivity control (VQ+Lip) rather than providing interchangeable capacity.
This complementarity helps explain why gains are broad across tasks: the same pretrain-then-finetune pipeline remains strong on clean data while retaining headroom for robustness under shifts and perturbations evaluated next.

\subsection{Results on Noisy Graphs}
We train on clean graphs and use clean validation for model selection, then evaluate the same frozen model under inference-time perturbations:
(a) feature masking ($\alpha\in\{0.4,0.5,0.6\}$) and (b) edge deletion ($p\in\{0.4,0.5,0.6\}$).
Figure~\ref{fig:noise_cora_pubmed_feature_structural} shows that STEM-GNN degrades most gracefully and maintains the strongest accuracy trend across both corruption types.
Notably, the margin widens at higher severities, where several baselines exhibit sharper drops, suggesting improved resilience to partial attribute loss and connectivity disruption without noise-aware training.
This behavior is consistent with VQ stabilizing the encoder-to-head interface under small perturbations (Lemma~\ref{lem:vq_margin_invariance}) and Lipschitz control limiting amplification when token switches occur (Proposition~\ref{prop:vq_lip_combo}).
Complete results are in Appendix~\ref{sec:appendix_experiments}.

\subsection{Out-Of-Distribution Generalization}

We evaluate OOD generalization by training on a class-stratified split of the in-distribution (non-shifted) nodes and using its clean validation accuracy for model selection only;
the same frozen model is then tested on attribute-defined distribution shifts.
Table~\ref{tab:distribution-shift} reports results under degree shift and homophily shift,
where OOD-low/OOD-high denote the bottom/top 15\% by the respective attribute.
STEM-GNN achieves the best average under both shifts (86.19\% degree, 87.20\% homophily), with the gains concentrating on challenging OOD-high buckets,
e.g., degree OOD-high on Cora (86.08 vs.\ GFT's 84.98) and homophily OOD-high on Cora (91.03 vs.\ GFT's 90.44).
This trend is consistent with stronger benefits when neighborhood aggregation is more influential, where stable routing and a discretized interface help mitigate shift-induced variation.
Overall, the results are consistent with the coverage--selection perspective of Eq.~\ref{eq:H2_ood_bound_main_th}, suggesting that expanded mechanism coverage and more stable routing under shift help explain the observed worst-environment gains.

\subsection{Transferability Analysis}

\begin{figure}[!t]
  \centering
  \makebox[\linewidth][c]{%
    \includegraphics[width=1\linewidth]{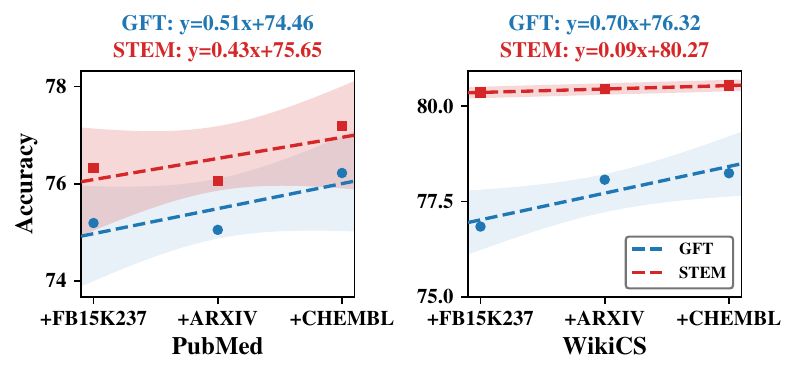}
  }
  \vspace{-0.3in}
  \caption{Cross-domain transfer with increasing pretraining diversity.
  % (FB15K-237 $\rightarrow$ +Arxiv $\rightarrow$ +ChEMBL).
STEM-GNN is less sensitive to the source mixture than GFT on PubMed and WikiCS (least-squares fits shown).
}
  \vspace{-0.1in}
\label{fig:cross-domain}
\end{figure}

Figure~\ref{fig:cross-domain} evaluates cross-domain transfer by increasing pretraining diversity (FB15K-237 $\rightarrow$ +Arxiv $\rightarrow$ +ChEMBL) and fine-tuning on PubMed and WikiCS.
Beyond mean accuracy, STEM-GNN shows stronger \emph{mixture robustness}: it is less sensitive to the source mixture and exhibits flatter scaling than GFT (least-squares slopes: PubMed 0.43 vs.\ 0.51; WikiCS 0.09 vs.\ 0.70).
Thus, adding heterogeneous sources yields more predictable gains for STEM-GNN, while GFT shows larger composition-induced variability, especially on WikiCS.
% This is valuable under frozen deployment, where practitioners must commit to a fixed pretraining recipe without exhaustive tuning.
The pattern aligns with MoE expanding coverage across heterogeneous sources and Lipschitz control limiting amplification of source-induced shifts, reducing negative transfer when mixing disparate domains.

\begin{table}[t]
  \centering
  % \footnotesize
  % \setlength{\tabcolsep}{2pt}
  % \renewcommand{\arraystretch}{1.05}
 \caption{Tri-objective results. We select the epoch by clean validation, evaluate the frozen checkpoint on \textbf{Fit}, \textbf{OOD-worst}, and \textbf{Perturb-mean} (feature masking). \textbf{Avg} is their mean.}

    \vspace{-5pt}
  \label{tab:tri_objective_multi_dataset}
  \resizebox{\columnwidth}{!}{%
    \begin{tabular}{@{}llcccc@{}}
      \toprule
      Dataset & Method                             & Fit$\uparrow$                                     & OOD-worst$\uparrow$                               & Perturb-mean$\uparrow$                            & Avg$\uparrow$                                     \\
      \midrule

      \multirow{6}{*}{Cora}
              & CaNet~\cite{wu2024graph}           & 83.63$\pm$1.86                                    & 75.20$\pm$1.80                                    & 82.53$\pm$1.76                                    & 80.45$\pm$1.81                                    \\
              & GraphMETRO~\cite{wu2024graphmetro} & 83.67$\pm$1.46                                    & \cellcolor{bestcolor}\textbf{79.11$\pm$1.14}      & 82.95$\pm$1.43                                    & 81.91$\pm$1.34                                    \\
              & MARIO~\cite{zhu2024mario}          & 83.39$\pm$1.61                                    & 78.27$\pm$1.83                                    & 83.11$\pm$1.49                                    & 81.59$\pm$1.64                                    \\
              & GFT~\cite{wang2024gft}             & \cellcolor{secondcolor}\underline{86.58$\pm$1.75} & 76.47$\pm$1.59                                    & \cellcolor{secondcolor}\underline{84.14$\pm$2.34} & \cellcolor{secondcolor}\underline{82.40$\pm$1.89} \\
              & TFEGNN~\cite{duan2024unifying}     & 83.77$\pm$0.98                                    & 77.07$\pm$1.65                                    & 83.57$\pm$1.20                                    & 81.47$\pm$1.28                                    \\
      % \rowcolor{ourscolor}
              & \textbf{STEM-GNN}                      & \cellcolor{bestcolor}\textbf{87.31$\pm$1.56}      & \cellcolor{secondcolor}\underline{78.45$\pm$0.77} & \cellcolor{bestcolor}\textbf{85.88$\pm$1.38}      & \cellcolor{bestcolor}\textbf{83.88$\pm$1.24}      \\
      \midrule

      \multirow{6}{*}{PubMed}
              & CaNet~\cite{wu2024graph}           & 88.43$\pm$0.42                                    & \cellcolor{bestcolor}\textbf{85.88$\pm$0.44}      & 86.08$\pm$0.21                                    & \cellcolor{secondcolor}\underline{86.80$\pm$0.36} \\
              & GraphMETRO~\cite{wu2024graphmetro} & 87.63$\pm$0.70                                    & 84.30$\pm$0.49                                    & 83.36$\pm$0.97                                    & 85.10$\pm$0.72                                    \\
              & MARIO~\cite{zhu2024mario}          & 86.14$\pm$0.56                                    & 84.13$\pm$0.57                                    & 83.55$\pm$0.63                                    & 84.61$\pm$0.59                                    \\
              & GFT~\cite{wang2024gft}             & \cellcolor{secondcolor}\underline{89.49$\pm$0.65} & 84.60$\pm$0.39                                    & 84.68$\pm$1.77                                    & 86.26$\pm$0.94                                    \\
              & TFEGNN~\cite{duan2024unifying}     & 88.52$\pm$0.65                                    & \cellcolor{secondcolor}\underline{85.12$\pm$0.57} & \cellcolor{bestcolor}\textbf{86.50$\pm$0.32}      & 86.71$\pm$0.51                                    \\
      % \rowcolor{ourscolor}
              & \textbf{STEM-GNN}                      & \cellcolor{bestcolor}\textbf{90.14$\pm$0.62}      & 84.77$\pm$0.50                                    & \cellcolor{secondcolor}\underline{86.26$\pm$0.66} & \cellcolor{bestcolor}\textbf{87.06$\pm$0.59}      \\
      \midrule

      \multirow{6}{*}{WikiCS}
              & CaNet~\cite{wu2024graph}           & 83.13$\pm$0.98                                    & \cellcolor{secondcolor}\underline{79.86$\pm$0.43} & 81.09$\pm$0.86                                    & 81.36$\pm$0.76                                    \\
              & GraphMETRO~\cite{wu2024graphmetro} & 81.77$\pm$1.46                                    & 76.94$\pm$1.23                                    & 78.53$\pm$1.28                                    & 79.08$\pm$1.32                                    \\
              & MARIO~\cite{zhu2024mario}          & 83.25$\pm$0.86                                    & 77.28$\pm$0.93                                    & 81.21$\pm$0.89                                    & 80.58$\pm$0.89                                    \\
              & GFT~\cite{wang2024gft}             & \cellcolor{secondcolor}\underline{85.41$\pm$0.96} & 78.46$\pm$0.79                                    & \cellcolor{secondcolor}\underline{82.44$\pm$1.12} & 82.10$\pm$0.96                                    \\
              & TFEGNN~\cite{duan2024unifying}     & 83.91$\pm$0.32                                    & \cellcolor{bestcolor}\textbf{80.84$\pm$0.53}      & 82.21$\pm$0.28                                    & \cellcolor{secondcolor}\underline{82.32$\pm$0.38} \\
      % \rowcolor{ourscolor}
              & \textbf{STEM-GNN}                      & \cellcolor{bestcolor}\textbf{85.57$\pm$0.76}      & 79.02$\pm$0.67                                    & \cellcolor{bestcolor}\textbf{83.90$\pm$0.69}      & \cellcolor{bestcolor}\textbf{82.83$\pm$0.71}      \\
      \bottomrule
    \end{tabular}%
  }
       \vspace{-0.2in}
\end{table}

\vspace{-0.1in}
\subsection{Tri-Objective results}

\noindent\textbf{Main Results.}
Table~\ref{tab:tri_objective_multi_dataset} reports a unified tri-objective evaluation on three node benchmarks.
Each method is trained once, selected by clean validation accuracy, then evaluated on:
(i) \textit{Fit} (clean test accuracy),
(ii) \textit{OOD-worst} (minimum accuracy over homophily-shifted splits), and
(iii) \textit{Perturb-mean} (mean accuracy under feature masking, $\alpha\in\{0.2,0.4,0.6,0.8\}$),
with \textit{Avg} averaging the three.
STEM-GNN achieves the best \textit{Avg} on all datasets (Cora: 83.88, PubMed: 87.06, WikiCS: 82.83), indicating a better balance under frozen deployment.
Relative to GFT, the gain improves \textit{Fit}, \textit{Perturb-mean}, and \textit{OOD-worst} rather than trading one axis for another.
On Cora, STEM-GNN improves over GFT on all three axes (\textit{Fit} 87.31 vs.\ 86.58, \textit{Perturb-mean} 85.88 vs.\ 84.14, \textit{OOD-worst} 78.45 vs.\ 76.47), yielding +1.48 in \textit{Avg} and visibly tightening the tri-objective frontier.
% More broadly, under the same clean-only selection criterion, lifting one axis often fails to lift the others, revealing tri-objective tension within a single protocol.
This suggests that coverage expansion and stability control remain compatible with clean generalization, consistent with \S\ref{sec:H2_main_th}.

\begin{table}[t]
  \caption{Tri-objective ablation on node benchmarks.
 We report \textit{Fit}, \textit{OOD-worst}, \textit{Perturb-mean}, and \textit{Avg}.
}

       \vspace{0in}
  \label{tab:tri_ablation}
  \centering
  % \setlength{\tabcolsep}{3pt}
  % \renewcommand{\arraystretch}{1.10}
  % \scriptsize
  \resizebox{\columnwidth}{!}{%
    \begin{tabular}{llcccc}
      \toprule
      Dataset & Metric       & Full                                              & w/o MoE                                           & w/o VQ                                            & w/o Lip                                           \\
      \midrule
      \multirow{3}{*}{Cora}
              & Fit          & \cellcolor{secondcolor}\underline{87.31}$\pm$1.56 & 86.73$\pm$1.32                                    & 86.57$\pm$1.44                                    & \cellcolor{bestcolor}\textbf{87.78}$\pm$1.39      \\
              & OOD-worst    & \cellcolor{bestcolor}\textbf{78.45}$\pm$0.77      & 77.76$\pm$1.66                                    & 77.01$\pm$1.34                                    & \cellcolor{secondcolor}\underline{78.25}$\pm$1.10 \\
              & Perturb-mean & \cellcolor{bestcolor}\textbf{85.88}$\pm$1.38      & 85.04$\pm$1.47                                    & \cellcolor{secondcolor}\underline{85.37}$\pm$1.46 & 85.33$\pm$1.22                                    \\
      \midrule
      \multirow{3}{*}{PubMed}
              & Fit          & \cellcolor{bestcolor}\textbf{90.14}$\pm$0.62      & 89.07$\pm$0.55                                    & 89.26$\pm$0.35                                    & \cellcolor{secondcolor}\underline{89.30}$\pm$0.58 \\
              & OOD-worst    & \cellcolor{bestcolor}\textbf{84.77}$\pm$0.50      & \cellcolor{secondcolor}\underline{84.14}$\pm$0.35 & 84.09$\pm$0.47                                    & 83.58$\pm$0.49                                    \\
              & Perturb-mean & \cellcolor{bestcolor}\textbf{86.26}$\pm$0.66      & \cellcolor{secondcolor}\underline{85.68}$\pm$0.71 & 85.12$\pm$0.96                                    & 85.37$\pm$0.60                                    \\
      \midrule
      \multirow{3}{*}{WikiCS}
              & Fit          & \cellcolor{bestcolor}\textbf{85.57}$\pm$0.76      & 85.30$\pm$0.56                                    & 84.96$\pm$0.57                                    & \cellcolor{secondcolor}\underline{85.40}$\pm$0.55 \\
              & OOD-worst    & \cellcolor{bestcolor}\textbf{79.02}$\pm$0.67      & 77.43$\pm$0.54                                    & \cellcolor{secondcolor}\underline{78.30}$\pm$0.53 & 77.99$\pm$0.50                                    \\
              & Perturb-mean & \cellcolor{bestcolor}\textbf{83.90}$\pm$0.69      & 82.03$\pm$0.99                                    & \cellcolor{secondcolor}\underline{83.16}$\pm$0.59 & 82.59$\pm$0.68                                    \\
      \midrule
      Avg.    &              & \cellcolor{bestcolor}\textbf{84.59}               & 83.69                                             & 83.76                                             & \cellcolor{secondcolor}\underline{83.95}          \\
      \bottomrule
    \end{tabular}%
  }
  \vspace{-0.2in}
\end{table}

\vspace{3pt}
\noindent\textbf{Ablation Study.}
Table~\ref{tab:tri_ablation} shows the strongest complementarity in the tri-objective setting: the full model achieves the best \textit{Avg} (84.59), and no ablation improves \textit{OOD-worst} and \textit{Perturb-mean} simultaneously.
Removing MoE causes the most consistent robustness drop (e.g., WikiCS \textit{OOD-worst} 79.02$\rightarrow$77.43), supporting MoE as the main driver of mechanism coverage.
VQ stabilizes robustness with similar \textit{Fit} but lower \textit{OOD-worst}/\textit{Perturb-mean} (e.g., Cora \textit{OOD-worst} 78.45$\rightarrow$77.01), while Lipschitz reveals a fit--robustness tradeoff (without Lip, Cora \textit{Fit} rises $87.31\!\rightarrow\!87.78$ but \textit{Avg} falls $84.59\!\rightarrow\!83.95$).
Overall, MoE expands coverage under shift, VQ stabilizes the encoder-head interface under drift, and Lipschitz control limits amplification under perturbations; a Vanilla MoE (w/o VQ \& Lip) stress test under stronger feature masking is reported in Appendix~\ref{app:additional_ablation}.

% Routing+switch figure moved to appendix (see app:routing_switch_fig).

\vspace{-0.1in}
\section{Conclusion}

We show that the test-time computation mechanism - not the training objective alone - governs the tri-objective balance in frozen graph deployment.
Our analysis explains why static inference hits a structural floor and how instance-conditional computation adds leverage but introduces routing failure modes.
STEM-GNN addresses these levers via MoE routing, VQ tokenization, and Lipschitz control; clean and tri-objective ablations confirm component complementarity.
Under clean-only selection with OOD and perturbation evaluation, STEM-GNN achieves a better balance through MoE coverage and VQ+Lip stability control in practice.
This shows that expanding mechanisms and constraining sensitivity can coexist in a single frozen model.
However, fixed expert count and a frozen codebook may limit adaptability to distant domains.
Adaptive expert allocation, online codebook refinement, and temporal drift are promising future research directions.

\begin{acks}
This work was partially supported by the NSF under grants IIS-2528540, IIS-2334193, IIS-2340346, CNS-2426514, and CMMI-2146076. This work also used computational resources provided through NSF ACCESS grant CIS260048. Any opinions, findings, conclusions, or recommendations expressed in this material are those of the authors and do not necessarily reflect the views of the sponsors.
\end{acks}

% References
\bibliographystyle{ACM-Reference-Format}
\balance
\bibliography{references}

% \clearpage
% \appendix
% \onecolumn

\appendix

% =========================================================
% Appendix: H1 details (complete, self-contained)
% =========================================================
\section{Proof of the Static-Inference Witness Tension}
\label{app:H1}

\paragraph{Setup.}
A predictor $f_\theta\in\mathcal{H}_1$ applies a single computation rule under frozen inference (no instance-wise routing). Fix graph operators $T_{\mathrm{L}}(G), T_{\mathrm{H}}(G)$ that decompose node features into complementary components $X_{\mathrm{L}}:=T_{\mathrm{L}}(G)X$ and $X_{\mathrm{H}}:=T_{\mathrm{H}}(G)X$ with $T_{\mathrm{L}}(G)+T_{\mathrm{H}}(G)=I$ (e.g., low-/high-pass filters). The witness subfamily $\mathcal{H}_1^{\mathrm{wit}}\subseteq\mathcal{H}_1$ consists of static predictors
\begin{equation}
f_\theta(G,X)\ :=\ h_\theta\!\big(X_{\mathrm{L}} + \eta_\theta X_{\mathrm{H}}\big),\qquad \eta_\theta\ge 0,
\label{eq:H1_proxy_mix_app}
\end{equation}
where $\eta_\theta$ is fixed after training and $h_\theta$ is the task head. Throughout this appendix we specialize the perturbation set to feature perturbations only (structure budget $\rho_s=0$), with admissible set $B(G,X):=\{X':(G,X')\in B((G,X))\}$.

\begin{assumption}[High-component necessity and perturbations]
\label{ass:H1_conflict}
Fix $\alpha>0$. There exist a reference distribution $D_0$ over $(G,X)$, a witness test environment $e_1\in E_{\mathrm{test}}$ with distribution $D_{e_1}$ over $((G,X),y)$, and constants $\rho>0,L_h>0$ such that:
(i)~there exists a monotone decreasing $\psi_{\mathrm{fit}}:[0,\infty)\to\mathbb{R}_+$ with $\mathcal{R}_{\mathrm{fit}}(\theta)\ge\psi_{\mathrm{fit}}(\eta_\theta)$ and $\psi_{\mathrm{fit}}(0)>\alpha$;
(ii)~there exists a monotone decreasing $\psi_{e_1}:[0,\infty)\to\mathbb{R}_+$ with $\mathbb{E}_{D_{e_1}}[\ell(f_\theta(G,X),y)]\ge\psi_{e_1}(\eta_\theta)$ and $\psi_{e_1}(0)>0$;
(iii)~admissible perturbations act on the high component, i.e., for $(G,X)\sim D_0$ and any $X'\in B(G,X)$, $X'_{\mathrm{L}}=X_{\mathrm{L}}$ and $\|X'_{\mathrm{H}}-X_{\mathrm{H}}\|\le\rho$;
(iv)~the head $h_\theta$ is $L_h$-Lipschitz: $d_{\mathrm{out}}(h_\theta(u),h_\theta(v))\le L_h\|u-v\|$ for all $u,v$.
\end{assumption}

\begin{proof}[Proof of Theorem~\ref{thm:H1_lower_main_th}]
Recall $\eta_{\max}(\varepsilon):=\varepsilon/(L_h\rho)$ and $\eta_{\min}(\alpha):=\inf\{\eta\ge 0:\psi_{\mathrm{fit}}(\eta)\le\alpha\}$.

\smallskip\noindent\textbf{Step 1 (the constraint caps $\eta_\theta$).}
For $(G,X)$ and $X'\in B(G,X)$, Assumption~\ref{ass:H1_conflict}(iii) gives $X'_{\mathrm{L}}=X_{\mathrm{L}}$ and $\|X'_{\mathrm{H}}-X_{\mathrm{H}}\|\le\rho$, so the input to $h_\theta$ differs by at most $\eta_\theta\rho$. Assumption~\ref{ass:H1_conflict}(iv) then yields
\[
\sup_{X'\in B(G,X)} d_{\mathrm{out}}\big(f_\theta(G,X),f_\theta(G,X')\big)\ \le\ L_h\,\eta_\theta\,\rho,
\]
and hence $\mathcal{R}_{\mathrm{stab}}(\theta)\le L_h\,\eta_\theta\,\rho$ in expectation. The Lipschitz stability constraint $L_h\,\eta_\theta\,\rho\le\varepsilon$ rearranges to $\eta_\theta\le\eta_{\max}(\varepsilon)$.

\smallskip\noindent\textbf{Step 2 (fit forces a minimum $\eta_\theta$).}
Assumption~\ref{ass:H1_conflict}(i) gives $\mathcal{R}_{\mathrm{fit}}(\theta)\ge\psi_{\mathrm{fit}}(\eta_\theta)$. Combined with $\mathcal{R}_{\mathrm{fit}}(\theta)\le\alpha$, the definition of $\eta_{\min}(\alpha)$ forces $\eta_\theta\ge\eta_{\min}(\alpha)$.

\smallskip\noindent\textbf{Step 3 (combine).}
If $\eta_{\min}(\alpha)>\eta_{\max}(\varepsilon)$, no $\theta\in\mathcal{H}_1^{\mathrm{wit}}$ satisfies both constraints. Otherwise $\eta_\theta\in[\eta_{\min}(\alpha),\eta_{\max}(\varepsilon)]$, and by Assumption~\ref{ass:H1_conflict}(ii) with $\psi_{e_1}$ monotone decreasing,
\[
\mathcal{R}_{\mathrm{ood}}(\theta)\ \ge\ \mathbb{E}_{D_{e_1}}\!\big[\ell(f_\theta(G,X),y)\big]\ \ge\ \psi_{e_1}(\eta_\theta)\ \ge\ \psi_{e_1}\!\big(\eta_{\max}(\varepsilon)\big),
\]
which is $>0$ by Assumption~\ref{ass:H1_conflict}(ii).
\end{proof}

\section{Proofs for ICC Coverage and Stability Levers}
\label{app:H2_details}

This appendix proves the two main-text levers, Eq.~\eqref{eq:H2_ood_bound_main_th} (coverage--selection) and Eq.~\eqref{eq:H2_stab_decomp_main_th} (stability decomposition).

\subsection{Coverage-selection bound}
\label{app:H2_coverage}

Consider a finite family of mechanisms $\mathcal{F}=\{f^{(k)}\}_{k\in\mathcal{K}}$ and a fixed deterministic discretization $\kappa:\mathcal{R}\to\mathcal{K}$, inducing a discrete selector $\tilde r_\theta(z):=\kappa(r_\theta(z))$ and a routed predictor $f_\theta(z):=f^{(\tilde r_\theta(z))}(z)$. For each $e\in E_{\mathrm{test}}$, let
\[
\begin{aligned}
k^\star(e)
&\in \arg\min_{k\in\mathcal{K}}
\mathbb{E}_{(z,y)\sim D_e}[\ell(f^{(k)}(z),y)],\\
\delta_{\mathrm{sel}}(e)
&:= \mathbb{P}_{z\sim D_e}[\tilde r_\theta(z)\neq k^\star(e)],
\end{aligned}
\]
\[
\beta_{\mathrm{cov}}:=\sup_{e\in E_{\mathrm{test}}}\min_{k\in\mathcal{K}}\mathbb{E}_{(z,y)\sim D_e}[\ell(f^{(k)}(z),y)].
\]
We assume the evaluation loss is bounded: $0\le\ell\le L_{\max}$ on the support of $\bigcup_{e\in E_{\mathrm{test}}} D_e$ (training may still use an unbounded surrogate).

\begin{proof}[Proof of Eq.~\eqref{eq:H2_ood_bound_main_th}]
Pointwise, for any $(z,y)$ and $e\in E_{\mathrm{test}}$,
\[
\ell(f_\theta(z),y)\ \le\ \ell\big(f^{(k^\star(e))}(z),y\big)+L_{\max}\,\mathbf{1}\{\tilde r_\theta(z)\neq k^\star(e)\},
\]
since the indicator switches on iff the selected mechanism differs from $k^\star(e)$, and the loss is bounded by $L_{\max}$. Taking expectation under $D_e$ and using the definitions of $\beta_{\mathrm{cov}}$ and $\delta_{\mathrm{sel}}(e)$,
\[
\mathbb{E}_{(z,y)\sim D_e}[\ell(f_\theta(z),y)]\ \le\ \beta_{\mathrm{cov}}+L_{\max}\,\delta_{\mathrm{sel}}(e).
\]
Taking $\sup_{e\in E_{\mathrm{test}}}$ yields Eq.~\eqref{eq:H2_ood_bound_main_th}.
\end{proof}

\subsection{Stability decomposition}
\label{app:H2_stability}

Equip $\mathcal{R}$ with a distance $\mathrm{dist}_{\mathcal{R}}$. Define the routing-fixed sensitivity and routing drift
\[
\mathcal{R}_{\mathrm{base}}(\theta):=\mathbb{E}_{z\sim D_0}\!\Big[\sup_{z'\in B(z)} d_{\mathrm{out}}\big(F(r_\theta(z),z),F(r_\theta(z),z')\big)\Big],
\]
\[
\mathcal{R}_{\mathrm{route}}(\theta):=\mathbb{E}_{z\sim D_0}\!\Big[\sup_{z'\in B(z)}\mathrm{dist}_{\mathcal{R}}\big(r_\theta(z),r_\theta(z')\big)\Big].
\]
Let the pointwise routing envelope and the uniform envelope (assumed finite) be
\[
L_F(z)
:=
\sup_{\substack{r,r'\in\mathcal{R}\\ \mathrm{dist}_{\mathcal{R}}(r,r')>0}}
\frac{d_{\mathrm{out}}(F(r,z),F(r',z))}{\mathrm{dist}_{\mathcal{R}}(r,r')},
\]
\[
L_F^B(D_0)
:=
\sup_{z\in\mathrm{supp}(D_0)}\ \sup_{z'\in B(z)} L_F(z').
\]

\begin{proof}[Proof of Eq.~\eqref{eq:H2_stab_decomp_main_th}]
Fix $z\sim D_0$ and any $z'\in B(z)$. Write $r=r_\theta(z),r'=r_\theta(z')$. By triangle inequality,
\[
d_{\mathrm{out}}\big(F(r,z),F(r',z')\big)\ \le\ d_{\mathrm{out}}\big(F(r,z),F(r,z')\big)\ +\ d_{\mathrm{out}}\big(F(r,z'),F(r',z')\big).
\]
The pointwise envelope gives $d_{\mathrm{out}}(F(r,z'),F(r',z'))\le L_F(z')\,\mathrm{dist}_{\mathcal{R}}(r,r')$. Taking $\sup_{z'\in B(z)}$ then $\mathbb{E}_{z\sim D_0}$, the first term yields $\mathcal{R}_{\mathrm{base}}(\theta)$. Since $L_F^B(D_0)$ is a uniform constant over $\mathrm{supp}(D_0)$, it factors out of the expectation, and the second term is bounded by $L_F^B(D_0)\,\mathcal{R}_{\mathrm{route}}(\theta)$.
\end{proof}

\section{Proofs for VQ and Lipschitz Control}
\label{sec:appendix_method_details}

\subsection{Proof of Lemma~\ref{lem:vq_margin_invariance}}
\label{sec:appendix_vq_proof}

\begin{proof}
Fix an input $z$ and a node $v\in V$, and let $j^\star := q_\theta(v;z) = \arg\min_{1\le j\le M}\|u_v(z)-c_j\|_2$. Recall the margin
\[
m_v(z) := \min_{j\neq j^\star}\|u_v(z)-c_j\|_2 - \|u_v(z)-c_{j^\star}\|_2.
\]
Fix any $z'\in B(z)$ and set $\delta := \|u_v(z')-u_v(z)\|_2$; by assumption, $\delta < \tfrac{1}{2}m_v(z)$. Triangle inequality gives, for any competitor $j\neq j^\star$,
\[
\begin{aligned}
\|u_v(z')-c_{j^\star}\|_2 &\le \|u_v(z)-c_{j^\star}\|_2 + \delta,\\
\|u_v(z')-c_j\|_2 &\ge \|u_v(z)-c_j\|_2 - \delta.
\end{aligned}
\]
Hence
\[
\|u_v(z')-c_j\|_2 - \|u_v(z')-c_{j^\star}\|_2 \ge m_v(z) - 2\delta > 0,
\]
so the nearest-code assignment is unchanged: $q_\theta(v;z') = j^\star = q_\theta(v;z)$. Since this holds for every $v\in V$ and every $z'\in B(z)$, $Q(u_\theta(z')) = Q(u_\theta(z))$ for all $z'\in B(z)$.
\end{proof}

\subsection{Proof of Proposition~\ref{prop:vq_lip_combo}}
\label{sec:appendix_vq_lip_bound}

Let $\mathcal{C}=\{c_j\}_{j=1}^{M}$ be the VQ codebook (the implementation uses grouped $H$-head VQ with $H{=}4$; $\mathcal{C}$ then denotes the concatenated per-head token space, and the invariance condition in Lemma~\ref{lem:vq_margin_invariance} is applied per head) and define its diameter
\[
\mathrm{diam}(\mathcal{C}) := \max_{i,j}\|c_i-c_j\|_2 .
\]
Let the token embedding for node $v$ be $\tilde u_v(z)=Q(u_v(z))\in\mathcal{C}$.

\paragraph{Node-level tasks (no pooling).}
For any $z'\in B(z)$ and any node $v$, both $\tilde u_v(z')$ and $\tilde u_v(z)$ lie in $\mathcal{C}$, hence
\[
\|\tilde u_v(z')-\tilde u_v(z)\|_2 \le \mathrm{diam}(\mathcal{C}).
\]
Since the head is $L$-Lipschitz under $\|\cdot\|_2$,
\[
\|f_t(\tilde u_v(z'))-f_t(\tilde u_v(z))\|_2
\le
L\|\tilde u_v(z')-\tilde u_v(z)\|_2
\le
L\,\mathrm{diam}(\mathcal{C}).
\]

\paragraph{Graph-level tasks (with pooling).}
Let $A(z)=\{\tilde u_v(z)\}_{v\in V}$ denote the token set.
Assume the pooling operator is $L_{\mathrm{pool}}$-Lipschitz with respect to the nodewise norm
\[
\|A\|_{V,2} := \Big(\frac{1}{|V|}\sum_{v\in V}\|a_v\|_2^2\Big)^{1/2},
\]
namely,
\[
\|\mathrm{Pool}(A)-\mathrm{Pool}(B)\|_2 \le L_{\mathrm{pool}}\|A-B\|_{V,2}.
\]
The per-node bound $\|\tilde u_v(z')-\tilde u_v(z)\|_2 \le \mathrm{diam}(\mathcal{C})$ implies
\[
\|A(z')-A(z)\|_{V,2} \le \mathrm{diam}(\mathcal{C}).
\]
Let $r(z)=\mathrm{Pool}(A(z))$. Then
\[
\|r(z')-r(z)\|_2 \le L_{\mathrm{pool}}\,\mathrm{diam}(\mathcal{C}),
\]
and by the $L$-Lipschitzness of $f_t$,
\[
\|f_t(r(z'))-f_t(r(z))\|_2
\le
L\|r(z')-r(z)\|_2
\le
L\,L_{\mathrm{pool}}\,\mathrm{diam}(\mathcal{C}).
\]
This matches Eq.~\eqref{eq:vq_lip_combo_body} ($L_{\mathrm{pool}}=1$ recovers the node-level bound).
\qed

\section{Experimental Details}
\label{sec:appendix_experiments}

\subsection{Routing Specialization and VQ Stability}
\label{app:routing_switch_fig}

Panel~(a) splits WikiCS test nodes by local feature homophily ($h\leq 0.33$ vs.\ $h>0.67$): Expert-0 drops $62\%{\to}46\%$ and Expert-2 rises $25\%{\to}43\%$, an $18$-point top-1 swing that rules out router collapse and supports the coverage mechanism behind $\beta_{\mathrm{cov}}$ in Eq.~\ref{eq:H2_ood_bound_main_th}. Panel~(b) reports the per-head mean codebook margin against the switch rate at $p{=}0.1$ as a descriptive summary; across the three datasets larger mean margin coincides with lower switch rate (WikiCS $0.62$ / $6.2\%$; PubMed $0.29$ / $15.4\%$). Consistent with the per-head mechanism of Lemma~\ref{lem:vq_margin_invariance}, the sufficient invariance bound is governed by the minimum-margin head; the mean is shown only as a coarse aggregate and is not the quantity the lemma controls. The residual amplification of these switches is bounded by the Lipschitz penalty (Eq.~\ref{eq:lip_fro_penalty}) on the prediction head.

\begin{figure}[h]
  \centering
  \includegraphics[width=\linewidth]{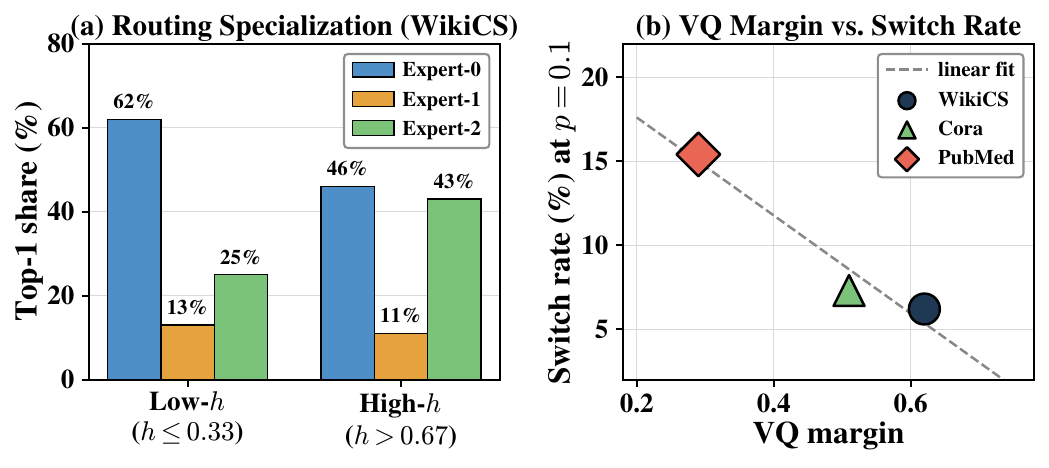}
  \vspace{-0.2in}
  \caption{Mechanism diagnostics. (a) Top-1 expert distribution on WikiCS, split by local feature homophily. (b) Per-dataset per-head mean codebook margin vs.\ token switch rate at $p{=}0.1$.}
  \vspace{-0.15in}
  \label{fig:routing_switch}
\end{figure}

\subsection{Additional Ablation Studies}
\label{app:additional_ablation}

\paragraph{Joint VQ and Lipschitz ablation under perturbation.}
We jointly remove both the VQ codebook and the Lipschitz penalty from STEM-GNN, retaining only the MoE encoder (\emph{Vanilla MoE}, w/o VQ \& Lip), and evaluate on Cora and PubMed under feature masking at $\alpha\in\{0.5,0.6\}$ (Table~\ref{tab:vanilla_moe_ablation}). The gap widens sharply with masking severity ($-2.29$ / $-2.67$ at $\alpha{=}0.6$ on Cora / PubMed), supporting that VQ and Lipschitz jointly bound the amplification of perturbation-induced representation and token-switch variation.

\begin{table}[t]
\centering
\caption{Joint ablation of VQ and Lipschitz under feature masking. Numbers are node-classification accuracy (\%); $\Delta$ is Vanilla MoE minus STEM-GNN.}
\label{tab:vanilla_moe_ablation}
\small
\begin{tabular}{lccc}
\toprule
Setting & STEM-GNN & Vanilla MoE (w/o VQ \& Lip) & $\Delta$ \\
\midrule
Cora, $\alpha{=}0.5$    & 75.98 & 75.41 & $-0.57$ \\
Cora, $\alpha{=}0.6$    & 73.77 & 71.48 & $\mathbf{-2.29}$ \\
PubMed, $\alpha{=}0.5$  & 74.50 & 74.31 & $-0.19$ \\
PubMed, $\alpha{=}0.6$  & 74.32 & 71.65 & $\mathbf{-2.67}$ \\
\bottomrule
\end{tabular}
\end{table}

\subsection{Additional Diagnostics}

\paragraph{Compute and parameter footprint.}
On Cora (NVIDIA L40), STEM-GNN trains at $5.24{\pm}0.04$~ms/epoch and infers at $2.72{\pm}0.09$~ms/pass with $279$~MB peak memory. The model has $9.93$~M parameters versus $7.57$~M for GFT; the $K{=}3$ MoE router is a single linear layer ($2.3$~K params), and the Lipschitz penalty is loss-only ($0$ params). Soft routing (Eq.~\ref{eq:moe_update_soft}) evaluates all experts in one forward pass, avoiding top-$k$ dispatch overheads of sparse MoE.

\paragraph{Statistical reliability and sensitivity.}
For the tri-objective evaluation, $50/60$ comparisons against five baselines show large effects ($|d|>0.8$), with $47/50$ of these having $95\%$ confidence intervals excluding zero. Sensitivity sweeps show only modest variation: changing the routing temperature $\tau\in[0.3,2.0]$ on WikiCS changes OOD accuracy by $<1.3$~pp, and changing the Lipschitz coefficient $\lambda_{\mathrm{lip}}\in[0,10^{-2}]$ on Cora changes it by $<1.0$~pp.

\paragraph{Per-dataset perturbation results.}
Figure~\ref{fig:noise_appendix} reports the remaining perturbation settings (PubMed/WikiCS feature masking, Cora/ WikiCS edge deletion); STEM-GNN stays the most stable across all panels and severities ($p\in\{0.4,0.5,0.6\}$).

\FloatBarrier

\begin{figure}[ht]
  \centering
  \includegraphics[width=\linewidth]{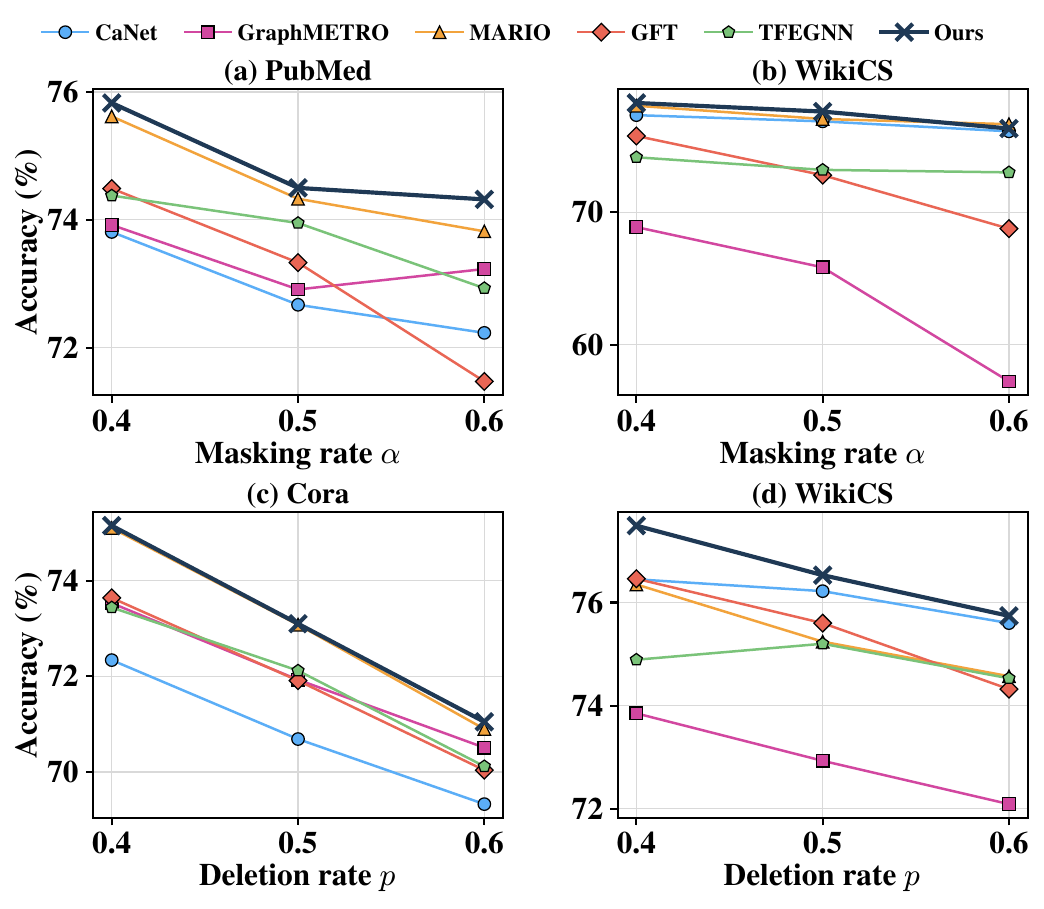}
  \vspace{-0.2in}
  \caption{Stability under inference-time perturbation (accuracy, \%) across feature masking and edge deletion.}
  \vspace{-0.1in}
  \label{fig:noise_appendix}
\end{figure}

\subsection{Hyperparameters}

\paragraph{Pre-training.}
The encoder is a two-layer GraphSAGE backbone (hidden $768$, ReLU, batch normalization, dropout $0.15$) with a vector-quantized token interface (codebook $M{=}128$, dim $768$, $4$ grouped heads, commitment weight $10$) and a last-layer MoE of $K{=}3$ experts with Gumbel-Softmax routing ($\tau{=}1.0$). Pre-training minimizes three objectives on the tokenized representations---node-feature reconstruction, link reconstruction over $10\%$ held-out edges, and semantic alignment---together with the VQ commitment loss, using feature masking and edge dropping (rate $0.2$). We optimize with AdamW (lr $1\mathrm{e}{-4}$, weight decay $1\mathrm{e}{-5}$, cosine schedule) for $25$ epochs at batch size $1024$.

\paragraph{Fine-tuning.}
We fine-tune with AdamW for frozen deployment: the VQ codebook is frozen ($\texttt{freeze\_vq=1}$), the last-layer MoE encoder ($K{=}3$) and a linear, Lipschitz-regularized head are optimized, and checkpoints are selected by validation accuracy. Batch normalization is used for link/graph tasks and disabled for node tasks.

\end{document}